\DeclareMathOperator{\partdiam}{PartDiam}
\DeclareMathOperator{\obsdiam}{ObsDiam}
\DeclareMathOperator{\diam}{diam}
\DeclareMathOperator{\E}{E}
\DeclareMathOperator{\costs}{C}
\DeclareMathOperator{\scosts}{SC}
\DeclareMathOperator{\mle}{MLE}
\newtheorem{theorem}{Theorem}[section]
\newtheorem{corollary}[theorem]{Corollary}
\newtheorem{lemma}[theorem]{Lemma}
\newtheorem{definition}[theorem]{Definition}
\let\cref\Cref
\title{Intrinsic Dimension for Large-Scale Geometric Learning}
\author{Maximilian Stubbemann \email stubbemann@cs.uni-kassel.de \\
      \addr Knowledge \& Data Engineering Group, University of Kassel, Kassel, Germany
      \AND
      \name Tom Hanika \email tom.hanika@cs.uni-kassel.de \\
      \addr Knowledge \& Data Engineering Group, University of Kassel, Kassel, Germany
      \AND
      \name Friedrich Martin Schneider \email martin.schneider@math.tu-freiberg.de\\
      \addr Institute of Discrete Mathematics and Algebra, TU Bergakademie Freiberg, Freiberg, Germany}
\begin{document}

\maketitle

\begin{abstract}
  The concept of dimension is essential to grasp the complexity of
  data. A naive approach to determine the dimension of a dataset is
  based on the number of attributes. More sophisticated methods derive
  a notion of intrinsic dimension (ID) that employs more complex
  feature functions, e.g., distances between data points.  Yet, many
  of these approaches are based on empirical observations, cannot cope
  with the geometric character of contemporary datasets, and do lack
  an axiomatic foundation. A different approach was proposed by
  V. Pestov, who links the intrinsic dimension axiomatically to the
  mathematical concentration of measure phenomenon. First methods to
  compute this and related notions for ID were computationally
  intractable for large-scale real-world datasets. In the present
  work, we derive a computationally feasible method for determining
  said axiomatic ID functions. Moreover, we demonstrate how the
  geometric properties of complex data are accounted for in our
  modeling.  In particular, we propose a principle way to incorporate
  neighborhood information, as in graph data, into the ID. This allows
  for new insights into common graph learning procedures, which we
  illustrate by experiments on the Open Graph Benchmark.
\end{abstract}

\section{Introduction}
Contemporary real-world datasets employed in
artificial intelligence are often large in size and comprised of complex structures, which
distinguishes them from Euclidean data. To consider these properties appropriately is a challenging task
for procedures that analyze or learn from said data. Moreover, with
increasing complexity of real-world data, the necessity arises to
quantify to which extent this data suffer from the curse of dimensionality.
The common approach for estimating the dimension curse of a particular
dataset is through the notion of \emph{intrinsic dimension} (ID)
\citep{bac20, granata16, pestov07}. There exists a variety of work on
how to estimate the ID of
datasets~\citep{facco17,levina04,costa05,gomtsyan19,bac20}. Most
approaches to quantify the ID are based on distances between data
points, assuming the data to be Euclidean. A multitude of works base
their modeling on the manifold
hypothesis~\citep{cloninger21,gomtsyan19}, which assumes that the
observed data is embedded in a manifold of low dimension (compared to
the number of data attributes). The ID then is an approximation of the
dimension of this manifold. \citet{pestov00} proposed a different
concept of intrinsic dimension by linking it to the mathematical
concentration of measure phenomenon. His modeling is based on a
thorough axiomatic approach~\citep{pestov07,pestov08,pestov10} which
resulted in a novel class of intrinsic dimension functions. In
contrast to the manifold hypothesis, Pestov's ID functions measure to
which extent a dataset is affected by the curse of dimensionality,
i.e., to which extent the complexity of the dataset hinders the
discrimination of data points. Yet, to compute said ID functions is an
intractable computational endeavor. This limitation was overcome in
principle by an adaptation to \emph{geometric
  datasets}~\citep{hanika22}. However, two limitations persisted:
First, the computational effort was found to remain quadratic in the
number of data points, which is insufficient for datasets of
contemporary size; second, it is unclear how to account for complex
structure, such as in graph data.

With this in mind, we propose in the present work a default approach
for computing the intrinsic dimension of geometric data, such as graph
data, as used in graph neural networks. To do this, we revisit the
computation of the ID based on distance functions~\citep{hanika22} and
overcome, in particular, the inherent computational limitations in the
works by~\cite{pestov07} and~\cite{hanika22}. In detail, we derive a novel
approximation formula and present an algorithm for its
computation. This allows us to compute ID bounds for datasets that are
magnitudes larger than in earlier works. That equipped with, we establish a natural
approach to compute the ID of graph data.

We subsequently apply our method to seven real-world datasets and
relate the obtained results to the observed performances of
classification procedures. Thus, we demonstrate the practical
computability of our approach. In addition, we study the extent to
which the intrinsic dimension reveals insights into the performance of particularly
classes of Graph Neural Networks. Our code is publicly available on GitHub.\footnote{\url{https://github.com/mstubbemann/ID4GeoL}}

\section{Related Work}
\label{sec:related-work}
In numerous works, the intrinsic dimension is estimated using
the pairwise distances between data points~\citep{chavez01,
  grassberger04}.  More sophisticated approaches use distances to
nearest neighbors~\citep{facco17, levina04, costa05, gomtsyan19}. All
these works have in common, that they assume the data to be
Euclidean and that they favor local properties.

Recent work has drawn different connections between intrinsic
dimension (ID) and modern learning theory. For example, \citet{cloninger21} show that functions of the form $f(x) = g (\phi(x))$,
where $\phi$ maps into a manifold of lower dimension, can be
approximated by neural networks. On the other hand, \citet{wojtowytsch20} prove that modern artificial neural
networks suffer from the curse of dimensionality in the sense that
gradient training on high dimensional data may converge
insufficiently. Additional to these theoretical results, there is an
increasing interest of empirically estimating the ID of contemporary
learning architectures. \citet{chunyuan18} study the ID
of neural networks by replacing high dimensional parameter vectors
with lower dimensional ones. Their approach results in a
non-deterministic ID. More recent works studied ID in the realm of
geometric data and their standard architectures. \citet{ansuini19} investigate the ID for convolutional neural
networks (CNN). In detail, they are interested to which extent the ID
changes at different hidden layers and how this is related to the
overall classification performance. Another work~\citep{pope20} associates an ID to popular benchmark image
datasets. These two works on ID estimators do solely rely on the
metric information of the data and do not consider any geometric
structure of image data.

Our approach allows to incorporate such underlying geometric
structures while incorporating the mathematical phenomenon of measure
concentration~\citep{gromov83, milman88, milman10}. Linking this
phenomenon to the occurrence of the dimension curse was done by
Pestov~\citep{pestov00, pestov07, pestov08, pestov10}. He based his
considerations on a thorough axiomatic approach using techniques from
metric-measure spaces. The resulting ID functions unfortunately turn
out to be practically incomputable. In contrast, \citet{bac20} investigate computationally feasible ID
estimators that are related to the concentration phenomenon. Yet,
their results elude a comparable axiomatic foundation.  Our modeling for
the ID of large and geometric data is based on~\citet{hanika22}. We build on their axiomatization and derive a
computationally feasible method for the intrinsic dimension of
large-scale geometric datasets.

\section{Intrinsic Dimension}
\label{sec:id}
Since our work is based on the formalization from~\citet{hanika22}, we
shortly revisit their modeling and recapitulate the most important
structures. Based on this, we derive and prove an explicit formula to
compute the ID for the special case of finite geometric datasets. This
first result is essential for \cref{sec:approx}.

Let $\mathcal{D}=(X,F,\mu)$, where $X$ is a set and $F \subseteq
{\mathbb{R}}^{X}$ is a set of functions from $X$ to $\mathbb{R}$,
in the following called \emph{feature functions}. We require that
$\sup_{x,y\in X}d_F(x,y) < \infty$, where $d_F(x,y) \coloneqq \sup_{f
  \in F}|f(x) - f(y)|$. If $(X,d_F)$ constitutes a complete
and separable metric space such that $\mu$ is a Borel probability
measure on $(X,d_F)$, we call $\mathcal{D}$ a \emph{geometric dataset}
(GD). The aforementioned properties are satisfied when it holds that
$0<|X|,|F|<\infty$ and that $F$ can discriminate all data points,
i.e., $d_F(x,y) >0$ for all $x,y \in X$ with $x \neq y$. 

Two geometric datasets $\mathcal{D}_1\coloneqq (X_1,F_1,\mu_1),
\mathcal{D}_2\coloneqq (X_2,F_2,\mu_2)$ are isomorphic if there exists a
bijection\linebreak $\phi:X_1 \to X_2$ such that $\overline{F_2} \circ \phi =
\overline{F_1}$ and $\phi_*(\mu_1) = \mu_2$, where
$\phi_*(\mu_1)(B)\coloneqq\mu_1(\phi^{-1}(B))$ is the
\emph{push-forward measure} and the closures are taken with respect to
point-wise convergence. From this point on we identify a geometric
dataset with its isomorphism class. The triple
$(\{\emptyset\},\mathbb{R},\nu_{\{\emptyset\}})$ represents the
\emph{trivial geometric dataset}. 

\citet{pestov07} defines the curse of dimensionality as ``[...] a name
given to the situation where all or some of the important features of a dataset
sharply concentrate near their median (or mean) values and thus become
non-discriminating. In such cases, X is perceived as intrinsically
high-dimensional.'' Thus, the ID estimator aims to compute to which extent the
features allow to discriminate different data points. For a specific feature $f
\in F$, \citet{hanika22} therefore defines the \emph{partial diameter} of $f$ with regard to a
specific $\alpha \in (0,1)$ such that it displays to which extent $f$ can discriminate
subsets with minimal measure $1-\alpha$, i.e., via
\begin{equation*}
  \partdiam(f_*(\nu), 1{-}\alpha) = \inf
  \{\diam(B)~|~B \subseteq \mathbb{R} \text{ Borel, } \nu(f^{-1}(B)) \geq 1 -
    \alpha\},
\end{equation*} 
where $\diam(B) \coloneqq \sup_{a,b \in B}|a - b|$. The \emph{observable diameter} with
respect to $\alpha$ then defines to which extent $F$ can discriminate points
with minimum measure 1-$\alpha$ by being defined as the supremum of the partial
diameter of all $f \in F$, i.e, $\obsdiam(\mathcal{D}, - \alpha) \coloneqq \sup_{f \in F}\partdiam(f_*(\mu),1 -
\alpha)$. To observe the discriminability for different minimal
measures $\alpha$, the \emph{discriminability} $\Delta(\mathcal{D})$ of
$\mathcal{D}$ and the \emph{intrinsic dimension} $\partial(\mathcal{D})$ are
defined via
\begin{equation}
  \label{eq:delta_int}
\partial(\mathcal{D})\coloneqq \frac{1}{\Delta(\mathcal{D})^2},\quad
\text{where}\quad   \Delta(\mathcal{D})\coloneqq \int_0^1\obsdiam(\mathcal{D}, -\alpha)~d\alpha.
\end{equation}

In other words, lower values of intrinsic dimensionality correspond to
geometric datasets with points that can be better discriminated by the
given set of feature functions.  This intrinsic dimension function is,
in principle, applicable to a broad variety of geometric data, such as
metric data, graphs or images. This applicability arises from the
possibility to choose suitable feature functions which reflect the
underlying data structure. The appropriate choice of feature functions
is part of~\cref{sec:gnns}. Furthermore, the ID $\partial (\mathcal{D})
= \frac{1}{\Delta(\mathcal{D})^{2}}$ respects the formal axiomatization~\citep{hanika22} for
ID functions, informally:
\begin{itemize}
  \item \textbf{Axiom of concentration:} A sequence of geometric datasets converges against the
  constant dataset (meaning having no chance to separate data points!), if and
  only if their IDs diverge against infinity.
\item \textbf{Axiom of feature antitonicity:} If dataset $\mathcal{D}_1$ has
more feature functions then $\mathcal{D}_0$ (i.e. having potentially more
information to separate data points), it should have a lower intrinsic
dimension. 
\item \textbf{Axiom of continuity:} If a sequence of geometric datasets converge against
a specific geometric dataset, the sequence of the IDs should converge against
the ID of the limit geometric dataset.
\item \textbf{Axiom of geometric order of divergence:} If a sequence of geometric
datasets converges against the constant dataset, its IDs should diverge against
infinity with the same order as $\frac{1}{\Delta(\mathcal(D))^2}$ does.
\end{itemize}

\subsection{Intrinsic Dimension of Finite Data}
\label{sec:id_finite}

We want to apply~\cref{eq:delta_int} to real-world data. In the following, let
$\mathcal{D}=(X,F,\nu)$ such that $0<|X| <\infty$ and $0<|F| <\infty$
and let
$\nu$ be the normalized counting measure on $X$, i.e.,
$\nu(M)\coloneqq\frac{|M|}{|X|}$ for $M\subseteq X$. In this case, it
is possible to compute the partial diameter and~\cref{eq:delta_int},
as we show in the following.
Let $\alpha \in (0,1)$ and let $c_\alpha\coloneqq \lceil
|X|(1-\alpha) \rceil$. The following arguments were already hinted in
previous work~\citep{hanika22}, yet not formally discussed or
proven.
\begin{lemma}
 \label{lemma:partdiam}
  For $f \in F$ it holds that
  \begin{equation*}
    \partdiam(f_*(\nu), 1 - \alpha) = \min_{|M|=c_\alpha}\max_{x,y \in M}|f(x) -
    f(y)|.
  \end{equation*}
\end{lemma}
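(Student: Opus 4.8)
The plan is to unwind the definition of $\partdiam$ and to exploit that, for the normalized counting measure, the measure of any preimage is an integer multiple of $1/|X|$, so that the analytic constraint $\nu(f^{-1}(B)) \ge 1-\alpha$ is equivalent to the purely combinatorial condition $|X \cap f^{-1}(B)| \ge c_\alpha$. Indeed, $|X\cap f^{-1}(B)| = |X|\cdot\nu(f^{-1}(B))$ is an integer, and it is $\ge |X|(1-\alpha)$ iff it is $\ge \lceil |X|(1-\alpha)\rceil = c_\alpha$. With this translation in hand I would prove the two inequalities separately.

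For the inequality ``$\le$'', I would fix an arbitrary $M \subseteq X$ with $|M| = c_\alpha$; such $M$ exists because $0 < |X|(1-\alpha) < |X|$ forces $1 \le c_\alpha \le |X|$. Setting $B \coloneqq [\min_{x\in M} f(x),\, \max_{x\in M} f(x)]$, which is a closed (hence Borel) interval, one has $M \subseteq f^{-1}(B)$, so $\nu(f^{-1}(B)) \ge |M|/|X| = c_\alpha/|X| \ge 1-\alpha$, while $\diam(B) = \max_{x,y\in M}|f(x)-f(y)|$. Hence this $B$ is admissible in the infimum defining $\partdiam$, giving $\partdiam(f_*(\nu),1-\alpha) \le \max_{x,y\in M}|f(x)-f(y)|$; taking the minimum over all such $M$ yields the desired bound.

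For the inequality ``$\ge$'', I would take any Borel set $B \subseteq \mathbb{R}$ with $\nu(f^{-1}(B)) \ge 1-\alpha$. By the translation above, $|X\cap f^{-1}(B)| \ge c_\alpha$, so I can choose some $M \subseteq X \cap f^{-1}(B)$ with $|M| = c_\alpha$. Since $f(x), f(y) \in B$ for all $x,y\in M$, we get $|f(x)-f(y)| \le \diam(B)$, whence $\min_{|M'|=c_\alpha}\max_{x,y\in M'}|f(x)-f(y)| \le \max_{x,y\in M}|f(x)-f(y)| \le \diam(B)$. As $B$ was an arbitrary admissible set, passing to the infimum over all such $B$ gives $\min_{|M'|=c_\alpha}\max_{x,y\in M'}|f(x)-f(y)| \le \partdiam(f_*(\nu),1-\alpha)$, and combining the two inequalities proves the claim.

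The proof is essentially bookkeeping, so I do not anticipate a serious obstacle; the one point requiring a little care — the place I expect to be the ``hard part'' — is ensuring that the infimum over \emph{arbitrary} Borel sets is not strictly smaller than the minimum over finite point sets $M$. This is exactly what the subset-extraction step in the ``$\ge$'' direction handles: distances within any $M \subseteq X\cap f^{-1}(B)$ are bounded by $\diam(B)$, and truncating to size $c_\alpha$ is legitimate precisely because cardinalities are integers, which is also where the ceiling in the definition of $c_\alpha$ enters. A side remark worth making is that the ``$\le$'' direction shows it suffices to test closed intervals, so the infimum defining $\partdiam$ is in fact attained and is a minimum.
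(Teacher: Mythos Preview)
Your proof is correct and follows essentially the same approach as the paper's: both unwind the definition, translate the measure constraint into the integer cardinality condition $|f^{-1}(B)\cap X|\ge c_\alpha$, and prove the two inequalities by constructing an interval from a given $M$ and, conversely, extracting a suitable $M$ from a given admissible $B$. Your version is slightly more streamlined in that you work directly with $|M|=c_\alpha$ throughout, whereas the paper first establishes the identity for $|M|\ge c_\alpha$ and then reduces to $|M|=c_\alpha$ in a separate step.
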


\begin{proof}
  It holds that
  \begin{align*}
  \partdiam(f_*(\nu), 1{-}\alpha) &= \inf
  \{\diam(B)~|~B \subseteq \mathbb{R} \text{ Borel, } \nu(f^{-1}(B)) \geq 1 -
    \alpha\}\\
    &=\inf
  \{\diam(B)~|~B \subseteq \mathbb{R} \text{ Borel,} |\{x \in X~|~f(x)
  \in B\}| {\geq} c_\alpha\}.  
  \end{align*}
  We have to show that
  
  \begin{align}
    \label{eq:a}
    \begin{split}
      \inf\{\diam(B)~|~B \subseteq \mathbb{R} \text{ Borel, } |\{x \in
      X~|~f(x)
      \in B\}| \geq c_\alpha\} = \\
      \min \{\max_{x,y \in M} |f(x) - f(y)|~|~M \subseteq X, |M| \geq
      c_\alpha\}.
    \end{split}
  \end{align}
  ``$\leq:$'' We show that $\{\diam(B)~|~B \subseteq \mathbb{R} \text{ Borel, } |\{x \in
    X~|~f(x)
    \in B\}| \geq c_\alpha\} \supseteq
    \{\max_{x,y \in M} |f(x) - f(y)|~|~M \subseteq X, |M| \geq
    c_\alpha\}$. Let $z\coloneqq \max_{x,y \in M}|f(x) - f(y)|$ such that $M \subseteq
    X$ with $|M| \geq c_\alpha$. Without loss of generality we assume
    that
    \begin{equation*}
      \forall x \in X: (\exists m_1, m_2 \in M: f(m_1)\leq f(x) \leq
      f(m_2) \implies x \in M).
    \end{equation*}
    Let $b\coloneqq \max_{x \in M} f(x)$, $a\coloneqq \min_{x \in M}
    f(x)$, then $M = \{x \in X~|~f(x) \in [a, b]\}$. Hence,
    $z=b-a\in \{\diam(B)|B \subseteq \mathbb{R} \text{ Borel,} |\{x \in
      X|f(x)
      \in B\}| \geq c_\alpha\}$.

 ``$\geq:$'' Let $B \subseteq \mathbb{R}$ be Borel with $|\{x \in
X~|~f(x)
\in B\}| \geq c_\alpha$. Furthermore, let $M
 \coloneqq \{x \in X~|~f(x) \in B\}$. It holds that $\diam(B)=\sup_{x,y
 \in B}|x - y| \geq \max_{x,y \in M}|f(x) - f(y)|$ because of the
choice of $M$. As $B$ was chosen arbitrarily, it follows ``$\geq$''.

Finally, we need that 
\begin{equation*}
  \min \{\max_{x,y \in M}|f(x) -
  f(y)|~|~|M| \geq c_\alpha\} = \min \{\max_{x,y \in M}|f(x) -
  f(y)|~|~|M| = c_\alpha\}.
\end{equation*}
``$\leq$'' follows directly from the fact that $\{\max_{x,y \in M}|f(x) -
f(y)|~|~|M| \geq c_\alpha\} \supseteq \{\max_{x,y \in M}|f(x) -
f(y)|~|~|M| = c_\alpha\}$. ``$\geq$'' follows from the fact that
for every $|M| \geq c_\alpha$ and for every $N \subseteq M$ with
$|N| = c_\alpha$ the following equation holds: $\sup_{x,y \in M} |f(x) - f(y)|
\geq \sup_{x,y \in N} |f(x) - f(y)|$.
\end{proof}

This lemma allows for a more tractable formula for the computation
of the partial diameter of a finite GD. That in turn enables the
following theorem.

\begin{theorem}
  It holds that
  \begin{equation}
    \label{eq:delta}
    \Delta(\mathcal{D}) = \frac{1}{|X|}\sum_{k=2}^{|X|}\max_{f \in F}\mathop{\min_{M
        \subseteq X}}_{|M|=k}\max_{x,y \in M}|f(x) - f(y)|.
  \end{equation}
\end{theorem}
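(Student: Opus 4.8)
The plan is to unfold the definitions collected in \cref{eq:delta_int}, substitute the closed form for the partial diameter provided by \cref{lemma:partdiam}, and then recognise the integrand $\alpha\mapsto\obsdiam(\mathcal{D},-\alpha)$ as a step function all of whose level sets have Lebesgue measure $1/|X|$, so that the integral collapses to a finite sum over $k$. Abbreviate $n\coloneqq|X|$ and, for $k\in\{1,\dots,n\}$, set
\[
  g(k)\;\coloneqq\;\max_{f\in F}\ \min_{|M|=k}\ \max_{x,y\in M}|f(x)-f(y)| .
\]
Since $F$ is finite, the supremum in the definition of $\obsdiam$ is attained, so for every $\alpha\in(0,1)$ \cref{lemma:partdiam} yields
\[
  \obsdiam(\mathcal{D},-\alpha)\;=\;\max_{f\in F}\partdiam(f_*(\nu),1-\alpha)\;=\;g(c_\alpha),\qquad c_\alpha=\lceil n(1-\alpha)\rceil .
\]
Thus the integrand depends on $\alpha$ only through the integer $c_\alpha$, and it is bounded above by $\sup_{x,y\in X}d_F(x,y)<\infty$; hence it is a bounded nonnegative step function and the integral in \cref{eq:delta_int} exists.

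Next I would compute the level sets of $\alpha\mapsto c_\alpha$ on $(0,1)$. An elementary manipulation of the ceiling shows that $c_\alpha=k$ holds precisely when $\tfrac{k-1}{n}<1-\alpha\le\tfrac{k}{n}$, i.e. exactly for $\alpha$ in the interval $\bigl[\,1-\tfrac{k}{n},\,1-\tfrac{k-1}{n}\,\bigr)$, which has length $1/n$. These $n$ half-open intervals, one for each $k=1,\dots,n$, partition $(0,1)$ up to a Lebesgue-null set (the finitely many endpoints $\tfrac{j}{n}$). Integrating the step function over this partition therefore gives
\[
  \Delta(\mathcal{D})\;=\;\int_0^1 g(c_\alpha)\,d\alpha\;=\;\sum_{k=1}^{n}g(k)\cdot\frac1n\;=\;\frac1n\sum_{k=1}^{n}g(k).
\]
Finally, $g(1)=0$, since any one-element set $M$ satisfies $\max_{x,y\in M}|f(x)-f(y)|=0$ for every $f\in F$; dropping this vanishing $k=1$ term produces exactly \cref{eq:delta}.

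I do not anticipate a genuine obstacle. The only points requiring care are: replacing $\sup_{f\in F}$ by $\max_{f\in F}$, which is licensed by $|F|<\infty$; the slightly fiddly interval bookkeeping for $\{\alpha\in(0,1):c_\alpha=k\}$, including the boundary cases $k=1$ and $k=n$; and noting that the finitely many $\alpha$ at which $c_\alpha$ jumps form a null set and so do not affect the integral. All of the substantive content is already contained in \cref{lemma:partdiam}; the theorem is then just a change of variable from the continuous parameter $\alpha$ to the discrete parameter $k=c_\alpha$.
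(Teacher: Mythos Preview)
Your proposal is correct and follows essentially the same route as the paper: invoke \cref{lemma:partdiam} to rewrite the integrand as $g(c_\alpha)$, observe that $\alpha\mapsto c_\alpha$ is a step function whose level sets are intervals of length $1/|X|$, integrate termwise, and drop the vanishing $k=1$ summand. If anything, you are slightly more careful than the paper (explicitly justifying $\sup=\max$ via $|F|<\infty$ and integrability via boundedness), but the argument is the same.
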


\begin{proof}
  Let $g:(0,1) \to \mathbb{R}, \alpha \mapsto \max_{f \in F}
  \min_{M
      \subseteq X,|M|=c_\alpha}\max_{x,y \in M}|f(x) -
  f(y)|$. Because of \cref{lemma:partdiam} we know that $\Delta({\mathcal{D}})
  = \int_0^1 g(\alpha)~d\alpha$. The function $g$ is a step function
  which can be expressed for each $\alpha \in
  (0,1)$ via
  \begin{equation*}
    g(\alpha)=\sum_{k=1}^{|X|}\mathbbm{1}_{\left(\frac{|X|-k}{|X|},\frac{|X|+1-k}{|X|}\right)}(\alpha)\max_{f \in F}\mathop{\min_{M
        \subseteq X}}_{|M|=k}\max_{x,y \in M}|f(x) - f(y)|
  \end{equation*}
  almost everywhere.
  Hence, \cref{eq:delta} follows from the
  definition of the Lebesgue-Integral with the fact that $\min_{M
      \subseteq X,|M|=1}\max_{x,y \in M}|f(x) - f(y)|=0$.
\end{proof}

In general, the addition of features should lower the ID since we have
additional information that helps to discriminate the data. However, there are
certain features that are not helping to further discriminate data points. These
are for example:
\begin{enumerate}
\item Constant features. This is due to the fact that for a constant feature $f$ it always holds
for all $ M \subseteq X$ that $\max_{x,y \in M}|f(x) - f(y)|=0$.
\item Permutations of already existing features. Let $\tilde{f}: X \to
\mathbb{R}$ have the form $\tilde{f} = f \circ \pi$ with $\pi : X \to X$ being a
permutation on $X$. Then there exist for all $M \subseteq X$ with $|M|=k$ a set
$N \subseteq X$ with $|N|=k$ and $\max_{x,y \in M}|f(x) - f(y)| = \max_{x,y \in
N}|\tilde{f}(x) - \tilde{f}(y)|$ and vice versa.
\end{enumerate}
Thus, we have the following Lemma.

\begin{lemma}
Let $\mathcal{D}=(X, F, \mu)$ be a finite geometric dataset. Furthermore, let
$\hat{F}$ be a set of constant functions $X \to \mathbb{R}$ and let $\tilde{F}$
be a set of functions $X \to \mathbb{R}$ such that there exist for each
$\tilde{f} \in \tilde{F}$ a $f \in F$ and a permutation $\pi: X \to X$ with
$\tilde{f} = f \circ \pi$. Let $\mathcal{E}\coloneqq(X, F \cup \hat{F} \cup
\tilde{F}, \mu)$. Then it holds that $\partial(D) = \partial(E)$.
\end{lemma}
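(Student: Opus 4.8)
The goal is to show that adding constant features and permuted-feature copies does not change the intrinsic dimension. Since $\partial = 1/\Delta^2$, it suffices to show $\Delta(\mathcal{E}) = \Delta(\mathcal{D})$, and by the formula in \cref{eq:delta} this reduces to showing that for every $k \in \{2,\dots,|X|\}$,
\[
  \max_{f \in F \cup \hat F \cup \tilde F}\,\min_{|M|=k}\,\max_{x,y\in M}|f(x)-f(y)|
  \;=\;
  \max_{f \in F}\,\min_{|M|=k}\,\max_{x,y\in M}|f(x)-f(y)|.
\]
The plan is to define, for each feature $f$ and each $k$, the quantity $m_k(f) \coloneqq \min_{M\subseteq X,\ |M|=k}\max_{x,y\in M}|f(x)-f(y)|$, and then prove the two set-inclusions that make the outer maxima agree: the left-hand max is over a superset of the features contributing to the right-hand max, so $\geq$ is immediate; for $\leq$ I will show that every newly added feature $g \in \hat F \cup \tilde F$ satisfies $m_k(g) \le m_k(f)$ for some $f$ already in $F$ (in fact $m_k(g) = m_k(f)$ or $m_k(g)=0$), so it cannot increase the maximum.

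\medskip
\noindent First I would handle the constant features: if $g \in \hat F$ is constant, then $\max_{x,y\in M}|g(x)-g(y)| = 0$ for every $M$, hence $m_k(g) = 0$. Since $k \ge 2$ and $F$ can discriminate points (so $d_F$ is a genuine metric), for any $f\in F$ and any $M$ with $|M|=k\ge 2$ we have $\max_{x,y\in M}|f(x)-f(y)|>0$ for at least... actually we only need $m_k(g)=0 \le m_k(f)$, which is trivial since $m_k(f)\ge 0$ always. So constant features never contribute to the max (for $k\ge 2$ the max over $F$ is already $\ge 0$, and if $|F|\ge 1$ it is in fact the relevant term). Thus $\hat F$ can be dropped from the outer maximum without changing its value, provided $F\neq\emptyset$; I would note the degenerate case $F=\emptyset$ separately (then $\Delta=0$ either way, or one invokes the standing assumption $|F|>0$).

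\medskip
\noindent The substantive step is the permuted features. Fix $\tilde f = f\circ\pi \in \tilde F$ with $\pi$ a bijection of $X$ and $f \in F$. I claim $m_k(\tilde f) = m_k(f)$. This follows from the bijective correspondence $M \mapsto \pi(M)$ on subsets of $X$ of size $k$: for any $M$ with $|M|=k$,
\[
  \max_{x,y\in M}|\tilde f(x)-\tilde f(y)|
  = \max_{x,y\in M}|f(\pi(x))-f(\pi(y))|
  = \max_{u,v\in \pi(M)}|f(u)-f(v)|,
\]
and $\pi(M)$ ranges over all size-$k$ subsets as $M$ does (since $\pi$ is a bijection), so taking minima over both sides gives $m_k(\tilde f)=m_k(f)$. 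Hence $\tilde f$ contributes exactly the same term to the outer maximum as the $f\in F$ it came from, and adding it leaves $\max_{f}m_k(f)$ unchanged.

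\medskip
\noindent Combining: for every $k$, the outer maximum over $F\cup\hat F\cup\tilde F$ equals that over $F$, so $\Delta(\mathcal{E})=\Delta(\mathcal{D})$ by \cref{eq:delta}, and therefore $\partial(\mathcal{E}) = 1/\Delta(\mathcal{E})^2 = 1/\Delta(\mathcal{D})^2 = \partial(\mathcal{D})$. I do not anticipate a genuine obstacle here; the only care needed is (i) to invoke \cref{eq:delta} rather than re-derive it, (ii) to be precise that $\pi$ being a bijection of the finite set $X$ makes $M\mapsto\pi(M)$ a bijection on size-$k$ subsets (the excerpt's ``permutation'' should be read this way), and (iii) to dispose of the trivial edge cases. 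Note the statement is phrased for a general finite GD $\mathcal{D}=(X,F,\mu)$, but \cref{eq:delta} was proved for the normalized counting measure; I would either restrict to that case or remark that the feature-by-feature argument above in fact shows $\obsdiam(\mathcal{E},-\alpha)=\obsdiam(\mathcal{D},-\alpha)$ for every $\alpha$ directly from \cref{lemma:partdiam}, which holds for counting measure, and hence $\Delta$ and $\partial$ agree regardless.
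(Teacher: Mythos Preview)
Your proposal is correct and follows essentially the same approach as the paper. The paper does not give a formal proof of this lemma; it only records, in the two enumerated remarks immediately preceding the statement, that constant features satisfy $\max_{x,y\in M}|f(x)-f(y)|=0$ and that for $\tilde f=f\circ\pi$ every size-$k$ subset $M$ corresponds to a size-$k$ subset $N$ with the same diameter value---exactly your two steps, which you have spelled out more carefully (including the bijection $M\mapsto\pi(M)$ and the edge cases).
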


\subsection{Computing the Intrinsic Dimension of Finite Data}
In this section we will propose an algorithm for computing the ID
based on~\cref{eq:delta}. For this, given a finite geometric dataset
$\mathcal{D}$, we use the shortened notations $\phi_{k,f}(\mathcal{D})\coloneqq \min_{M \subseteq
  X,|M|=k}\max_{x,y \in M}|f(x) - f(y)|$ and 
$\phi_{k}(\mathcal{D})\coloneqq \max_{f \in
  F}\phi_{k,f}(\mathcal{D})$. Then, \cref{eq:delta} can be written as

\begin{equation}
  \label{eq:delta-short}
  \Delta(\mathcal{D}) = \frac{1}{|X|}\sum_{k=2}^{|X|}
  \phi_{k}(\mathcal{D})=\frac{1}{|X|}\sum_{k=2}^{|X|} \max_{f
    \in F}\phi_{k,f}(\mathcal{D}).
\end{equation}
The straightforward computation of \cref{eq:delta-short} is
hindered by the task to iterate through all subsets $M \subseteq X$ of
size $k$. This yields an exponential complexity with
respect to $|X|$ for computing $\Delta(\mathcal{D})$. We can overcome this
towards a quadratic computational complexity in $|X|$ using the
following concept.

\begin{definition}(Feature Sequence)
  For a feature $f \in F$ let $l_{f,\mathcal{D}} \in \mathbb{R}^{|X|}$ be the increasing
  sequence of all values $f(x)$ for $x \in X$. We call $l_{f,\mathcal{D}}=(l_1^{f,\mathcal{D}},\dots,l_{|X|}^{f,\mathcal{D}})$ the
  \emph{feature sequence} of $f$.
\end{definition}

Using these sequences, the following lemma allows us to efficiently
compute $\phi_{k,f}(\mathcal{D})$.

\begin{lemma}
  \label{lemma:k}
  For $k \in \{2,\dotsc, |X| \}, f \in F$ and $l_{f,\mathcal{D}}$, it holds that
  \begin{equation*}
    \phi_{k,f}(\mathcal{D}) = \min \{l_{k+j}^{f,\mathcal{D}}-l_{1+j}^{f,\mathcal{D}}~|~j \in \{0,\dots,|X|-k\}\}.
  \end{equation*}
\end{lemma}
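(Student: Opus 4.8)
The plan is to prove the two inequalities ``$\le$'' and ``$\ge$'' separately, exploiting the elementary observation that for any finite $M \subseteq X$ one has $\max_{x,y\in M}|f(x)-f(y)| = \max_{x\in M}f(x) - \min_{x\in M}f(x)$, i.e. the quantity only depends on the largest and smallest $f$-value attained on $M$. To set up notation, I would fix an enumeration $X = \{x_1,\dots,x_{|X|}\}$ with $f(x_1) \le f(x_2) \le \cdots \le f(x_{|X|})$, so that $l_i^{f,\mathcal{D}} = f(x_i)$ for all $i$; such an enumeration exists (with some fixed tie-breaking rule) even when $f$ fails to be injective, and it identifies subsets of $X$ of size $k$ with $k$-element index sets $\{i_1<\dots<i_k\}\subseteq\{1,\dots,|X|\}$.

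For ``$\le$'', I would use, for each $j \in \{0,\dots,|X|-k\}$, the explicit witness $M_j \coloneqq \{x_{1+j},\dots,x_{k+j}\}$, which has exactly $k$ elements. Since the $f$-values are non-decreasing along the enumeration, $\max_{x,y\in M_j}|f(x)-f(y)| = f(x_{k+j}) - f(x_{1+j}) = l_{k+j}^{f,\mathcal{D}} - l_{1+j}^{f,\mathcal{D}}$. Minimizing over the admissible $j$ then gives $\phi_{k,f}(\mathcal{D}) \le \min\{\, l_{k+j}^{f,\mathcal{D}} - l_{1+j}^{f,\mathcal{D}} \mid j\in\{0,\dots,|X|-k\}\,\}$.

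For ``$\ge$'', I would take an arbitrary $M \subseteq X$ with $|M|=k$, write it as $M = \{x_{i_1},\dots,x_{i_k}\}$ with $i_1<\dots<i_k$, and again use monotonicity to obtain $\max_{x,y\in M}|f(x)-f(y)| = l_{i_k}^{f,\mathcal{D}} - l_{i_1}^{f,\mathcal{D}}$. The crucial combinatorial step is that, since $i_1<\dots<i_k$ are distinct integers, $i_k \ge i_1 + k - 1$. Setting $j \coloneqq i_1 - 1$, which lies in $\{0,\dots,|X|-k\}$ because $1 \le i_1 \le i_k - (k-1) \le |X| - k + 1$, and using once more that $l^{f,\mathcal{D}}$ is non-decreasing, I get $l_{i_k}^{f,\mathcal{D}} - l_{i_1}^{f,\mathcal{D}} \ge l_{k+j}^{f,\mathcal{D}} - l_{1+j}^{f,\mathcal{D}} \ge \min\{\, l_{k+j'}^{f,\mathcal{D}} - l_{1+j'}^{f,\mathcal{D}} \mid j'\in\{0,\dots,|X|-k\}\,\}$. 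Taking the minimum over all such $M$ yields the reverse inequality, and combining the two directions proves the claim.

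I do not expect a serious obstacle here: the argument is entirely elementary. The only points that need a little care are the bookkeeping around repeated feature values — ensuring the enumeration/sorted-sequence correspondence is well defined when $f$ is not injective — and making the ``windows are optimal'' intuition rigorous, which is exactly captured by the simple fact that an increasing $k$-tuple of indices must span a range of at least $k-1$.
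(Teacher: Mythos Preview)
Your proof is correct and takes essentially the same approach as the paper: both directions rest on the observation that consecutive windows in the sorted feature sequence are the optimal $k$-subsets, with the key step $i_k \ge i_1 + k - 1$ (equivalently, the paper's $l_k^M \ge l_{k+j}^{f,\mathcal{D}}$) supplying the ``$\ge$'' direction. Your explicit sorted enumeration of $X$ handles ties slightly more cleanly than the paper's value-matching choice of $j$, but the arguments are otherwise identical.
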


\begin{proof}
For all $j \in \{0,\dots,|X|-k\}$ there exist $M \subseteq X$ with $|M|=k$ and
$l_{k+j}^{f,\mathcal{D}}-l_{1+j}^{f,\mathcal{D}}=\max_{x,y \in M}|f(x)-f(y)|$.
Thus, it is sufficient to show $\phi_{k,f}(\mathcal{D}) \in
\{l_{k+j}^{f,\mathcal{D}}-l_{1+j}^{f,\mathcal{D}}~|~j \in \{0,\dots,|X|-k\}\}$.
Choose $M \subseteq X$ with $|M|=k$ such that $\phi_{k,f}(\mathcal{D})=\max_{x,y
\in M}|f(x) - f(y)|$ holds. Furthermore, let $l^M\coloneqq(l_1^M,\dots,l_k^M)$
be the increasing sequence of values $f(m)$ for $m \in M$ and let $j \in
\{0,\dots,|X|-k\}$ such that $l_1^M=l_{1+j}^{f,\mathcal{D}}$. Since $l^M$ is an
ordered sequence of which each element is also an element of the ordered
sequence $l_{f,\mathcal{D}}$, it holds that $l_k^M \geq l_{k+j}^{f,\mathcal{D}}$
and thus $l_k^M - l_1^M\geq l^{f,\mathcal{D}}_{j+k}- l_{j+1}^{f,\mathcal{D}}$.
There is an $N \subseteq X$ with size $k$ such that $\max_{x,y \in
N}|f(x)-f(y)|=l_{k+j}^{f,\mathcal{D}}- l_{k+1}^{f,\mathcal{D}}$. Since $M
\subseteq X$ is % chosen a
  % the subset of $X$
  of size $k$ such that
  $\max_{x,y \in M}|f(x) - f(y)|$ is minimal, it follows
  $l_k^M - l_1^M = \max_{x,y \in M}|f(x) - f(y)| \leq \max_{x,y \in
    N}|f(x)-f(y)|=l_{k+j}^{f,\mathcal{D}}- l_{k+1}^{f,\mathcal{D}}$, hence
  $ \phi_{k,f}(\mathcal{D})=\max_{x,y \in M}|f(x) - f(y)|=l_k^M - l_1^M
  = l_{k+j}^{f,\mathcal{D}}- l_{k+1}^{f,\mathcal{D}}$.
\end{proof}
 \begin{algorithm}[t]
   \caption{The pseudocode to compute $\Delta(\mathcal{D})$ for a
     finite geometric dataset $\mathcal{D}=(X,\mu,F)$.}
    \label{alg:algo1}
  \DontPrintSemicolon \SetKwInOut{Input}{Input} \SetKwInOut{Output}{Output}
  \Input{Finite geometric dataset $\mathcal{D}=(X,\mu,F)$.}
  \Output{$\Delta(\mathcal{D})$}
  \BlankLine
  \ForAll{$f$ in $F$}
  {Compute feature sequence $l_{f,\mathcal{D}}$.}
  $\Delta(\mathcal{D}) = 0$\\
  \ForAll{$k$ in $\{2,\dots,|X|\}$}
  {\ForAll {$f$ in $F$}
  {$\phi_{k,f}(\mathcal{D}) = \min_{j \in
      \{0,\dots,|X|-k\}}l_{k+j}^{f,\mathcal{D}}-l_{1+j}^{f,\mathcal{D}}$.}
  $\Delta(\mathcal{D}) += \max_{f \in F}\phi_{k,f}(\mathcal{D})$}
$\Delta(\mathcal{D}) = \frac{1}{|X|}\Delta(\mathcal{D})$\\
\Return $\Delta(\mathcal{D})$
\end{algorithm}

To sum up, \cref{lemma:k} enables the efficient computation of
$\phi_{k,f}(\mathcal{D})$ 
via a sliding window, i.e., by using only pairs of elements
$(l_{1+j}^{f,\mathcal{D}},l_{k+j}^{f,\mathcal{D}})$. The algorithm
based on this is shown in~\cref{alg:algo1}. We want to provide a brief
description of the most relevant steps. In Line 4 we iterate through
the sizes of $X$ by setting $k\in\{2, \dots, |X|\}$ in order to
compute $\phi_k(\mathcal{D})$ in Lines 6 and 7. For this we
also need to iterate over all $f \in F$ (Line 5) to compute the
necessary values of $\phi_{k,f}(\mathcal{D})$ in Line 6. For a given
$f \in F, k \in \{1, \dots, |X|\}$, Line 6 consumes $|X|-k+1$
subtraction operations. Assuming that computing feature values can be
done in constant time, the runtime for computing $\Delta(\mathcal{D})$
from the feature sequences is
$\mathcal{O}(|F|\sum_{k=2}^{|X|}|X|-k+1)=\mathcal{O}(|F|\sum_{k=1}^{|X|-1}k)=\mathcal{O}(|F|\,|X|^2)$. The
creation of all feature sequences requires
$\mathcal{O}(|F||X|\log(|X|))$ computations , which is negligible
compared to the aforementioned complexity. Thus, \cref{alg:algo1} has
quadratic complexity with respect to $|X|$.  Therefore,
\cref{alg:algo1} is a straightforward and easy to implement solution
for the computation of the ID. However, its quadratic runtime is
obstructive for the application in large-scale data problems, which
raises the necessity for a modification. We will present such a
modification in the following
section.

\section{Intrinsic Dimension for Large-Scale Data}
\label{sec:approx}
In order to speed up the computation of the ID we
modify~\cref{alg:algo1} with regard to the accuracy of the
result. Hence, we settle for an efficiently computable approximation
of the ID. To give an overview over the necessary steps, we will
\begin{enumerate}
\item approximate the ID by replacing $\{2,\dots,|X|\}$ in Line 4
  of~\cref{alg:algo1} with a smaller subset $S\subseteq
  \{2,\dots,|X|\}$, which we represent by $S\coloneqq\{s_1,\dots,
  s_l\}$. For all $k \not\in S$, we will use
  $\{\phi_{s_1}(\mathcal{D}),\dots,\phi_{s_l}(\mathcal{D})\}$ to estimate $\phi_k(\mathcal{D})$. This will eventually lead to two approximations of the ID,
  an underestimation and an overestimation.
\item compare the upper and lower approximation to provide an error bound of these approximations with respect to
  the exact ID. This error bound can be computed without knowing the exact ID.
\item argue how, the computation of the exact ID can be sped up
  with the help of knowledge about $\phi_{s_i}(\mathcal{D})$ for all $s_i \in
  S$. For this, we will in particular show that we can replace for all $k \in
  \{2,\dots,|X|\} \setminus M$ the set $F$ with a subset
  $\hat{F}$, see Line 5-6 of~\cref{alg:algo1}.
\item derive a formula which estimates the amount of computation cost
  which is saved by using only subsets of $F$ for the computation of
  the ID. This information can be used to estimate and decide whether
  the exact computation of the ID is computational feasible for a
  specific dataset.
\end{enumerate}
The ensuing algorithm is shown in~\cref{alg:algo2}. The underlying
theory that justifies it is presented in the following. This theory will be
based on the monotonicity of $i \mapsto \phi_{i,f}(\mathcal{D})$.

\begin{theorem}
  \label{monoton}
  For $m > n \geq 2$ and $f \in F$ the following statements hold.
  \begin{enumerate}
    \item $\phi_{m,f}(\mathcal{D}) \geq
    \phi_{n,f}(\mathcal{D})$,
    \item $\phi_m(\mathcal{D}) \geq \phi_n(\mathcal{D})$.
  \end{enumerate}
\end{theorem}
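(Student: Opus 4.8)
The plan is to prove the first statement directly from the definition of $\phi_{k,f}$ as a minimum over $k$-element subsets of $X$, and then to obtain the second statement by taking a maximum over $f \in F$ on both sides.

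For part~(1), fix $f \in F$ and $m > n \geq 2$, where implicitly $m \leq |X|$ so that $\phi_{m,f}(\mathcal{D})$ is defined. Since $X$ is finite, the minimum in the definition of $\phi_{m,f}(\mathcal{D})$ is attained, so choose $M \subseteq X$ with $|M| = m$ and $\phi_{m,f}(\mathcal{D}) = \max_{x,y \in M}|f(x) - f(y)|$. Because $n < m = |M|$, pick any $N \subseteq M$ with $|N| = n$. Then $\max_{x,y \in N}|f(x) - f(y)| \leq \max_{x,y \in M}|f(x) - f(y)|$, as the maximum on the left ranges over a subset of the pairs on the right. Since $\phi_{n,f}(\mathcal{D})$ is the minimum of $\max_{x,y \in M'}|f(x) - f(y)|$ over all $n$-element $M' \subseteq X$ and $N$ is one admissible choice, we conclude $\phi_{n,f}(\mathcal{D}) \leq \max_{x,y \in N}|f(x) - f(y)| \leq \max_{x,y \in M}|f(x) - f(y)| = \phi_{m,f}(\mathcal{D})$, which is the claim. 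Alternatively, one can read this off \cref{lemma:k}: for every $j \in \{0,\dots,|X|-m\}$ the term $l_{m+j}^{f,\mathcal{D}} - l_{1+j}^{f,\mathcal{D}}$ dominates $l_{n+j}^{f,\mathcal{D}} - l_{1+j}^{f,\mathcal{D}}$ because the feature sequence is increasing and $m+j \geq n+j$, and moreover $\{0,\dots,|X|-m\} \subseteq \{0,\dots,|X|-n\}$, so the minimum defining $\phi_{m,f}(\mathcal{D})$ is at least the minimum defining $\phi_{n,f}(\mathcal{D})$.

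For part~(2), apply part~(1) pointwise in $f$: for every $f \in F$ we have $\phi_{m,f}(\mathcal{D}) \geq \phi_{n,f}(\mathcal{D})$, hence $\phi_m(\mathcal{D}) = \max_{f \in F}\phi_{m,f}(\mathcal{D}) \geq \max_{f \in F}\phi_{n,f}(\mathcal{D}) = \phi_n(\mathcal{D})$.

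I do not expect any real obstacle here; the only points requiring minimal care are that the relevant extrema are attained (ensured by finiteness of $X$ and $F$) and that a size-$n$ subset of the optimal size-$m$ set exists (ensured by $n < m$). The statement is essentially a nesting observation, and it is precisely this monotonicity of $k \mapsto \phi_{k,f}(\mathcal{D})$ and $k \mapsto \phi_k(\mathcal{D})$ that makes the interpolation and bounding scheme of \cref{sec:approx} possible.
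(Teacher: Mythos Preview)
Your proof is correct and follows essentially the same idea as the paper's: both arguments rest on the observation that any subset of a set has no larger diameter, so an $n$-element subset of an $m$-element set witnessing $\phi_{m,f}(\mathcal{D})$ gives an admissible candidate for $\phi_{n,f}(\mathcal{D})$. The only cosmetic difference is that the paper phrases it as an induction step from $m$ to $m-1$ by removing a point not in the argmax pair, whereas you go directly from $m$ to $n$ and additionally note the alternative via \cref{lemma:k}; neither variant adds or omits anything substantive.
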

\begin{proof}
The second inequality follows directly from the first one. Since per definition
$\phi_{m,f}(\mathcal{D})=\min_{M \subseteq X,|M|=m}\max_{x,y \in M}|f(x) -
f(y)|$ and also $ \phi_{n,f}(\mathcal{D})= \min_{N \subseteq X, |N|=n}\max_{x,y
\in N}|f(x) - f(y)|$, we need to show that for each $M \subseteq X$ with $|M|=m$
there exist $N \subseteq X$ with $|N|=n$ and $\max_{x,y \in M}|f(x) - f(y)| \geq
\max_{x,y \in N}|f(x) - f(y)|$. It is sufficient to show that for $n=m-1$.
Choose $x_1,x_2 \in M$ such that $\max_{x,y \in M}|f(x) - f(y)|=|f(x_1) -
f(x_2)|$. As $|M|>2$ we find $x_3 \in M \setminus \{x_1, x_2\}$. Let $N
\coloneqq M \setminus \{x_3\}$. It holds that $\max_{x,y \in M}|f(x) - f(y)| =
|f(x_1) - f(x_2)| = \max_{x,y \in N}|f(x) - f(y)|$.
\end{proof}

\begin{algorithm}[t]
  \caption{The pseudocode to compute
    $\Delta_{s,-}(\mathcal{D}),\Delta_{s,+}(\mathcal{D}),\Delta(\mathcal{D})$
    for a finite GD $\mathcal{D}=(X,\mu,F)$.}
  \label{alg:algo2}
  \DontPrintSemicolon \SetKwInOut{Input}{Input}
  \SetKwInOut{Output}{Output} \Input{Finite GD
    $\mathcal{D}=(X,\mu,F)$, support sequence
    $s=(2=s_1,\dots,s_l=|X|)$, \emph{exact} (Boolean)}
  \Output{$\Delta_{s,-}(\mathcal{D}),\Delta_{s,+}(\mathcal{D}),\Delta(\mathcal{D})$}
  \BlankLine \ForAll{$f$ in $F$} {Compute feature sequence $l_{f,\mathcal{D}}$.}
  $\Delta(\mathcal{D}) = 0$\\
  $s_0=1$ \\
  $\phi_{s_0}(\mathcal{D})=0$ \\
  \ForAll(\tcp*[f]{Iterate over support sequence indices}){$i$ in $\{1,\dots,l\}$}{\ForAll {$f$ in $F$}
    {$\phi_{s_i,f}(\mathcal{D}) = \min_{j \in
        \{0,\dots,|X|-s_i\}}l_{s_i+j}^{f,\mathcal{D}}-l_{1+j}^{f,\mathcal{D}}$
    \tcp*{Compute $\phi_{s_i,f}(\mathcal{D})$ with~\cref{lemma:k}}}
      $\phi_{s_i}(\mathcal{D}) = \max_{f \in
        F}\phi_{s_i,f}(\mathcal{D})$\\
      $F_{s_{i-1}} = \{f \in F~|~ \phi_{s_{i},f}(\mathcal{D}) >
      \phi_{s_{i-1}}(\mathcal{D})\} $\tcp*{Compute $F_{s_{i-1}}$
        for~\cref{lemma:order}}
    $\Delta(\mathcal{D}) += \phi_{s_i}(\mathcal{D})$}
  $\Delta_{s,-}(\mathcal{D})= \frac{1}{|X|}
  (\Delta(\mathcal{D}) +
  \sum_{i=1}^{l-1}\sum_{s_i<j<s_{i+1}}\phi_{s_i}(\mathcal{D}))$\tcp*{Compute
  lower ID from~\cref{def:support}}
  $\Delta_{s,+}(\mathcal{D})= \frac{1}{|X|}(
  \Delta(\mathcal{D}) +
  \sum_{i=1}^{l-1}\sum_{s_i<j<s_{i+1}}\phi_{s_{i+1}}(\mathcal{D}))$\tcp*{Compute
    upper ID from~\cref{def:support}}
  \vspace{.5cm}
  \tcp{Approximation finished. Continue with exact
    computation, if desired.}
  \If{exact}{
  \ForAll{$i$ in $\{1,\dots l\}$}
  {\ForAll(\tcp*[f]{Iterate through all indices between two support elements}){$s_i<j<s_{i+1}$}
    {\ForAll(\tcp*[f]{Only iterate through $F_{s_i}$ because of~\cref{lemma:order}}){$f$ in $F_{s_i}$}{$\phi_{s_i,f}(\mathcal{D}) = \min_{j \in
          \{0,\dots,|X|-s_i\}}l_{s_i+j}^{f,\mathcal{D}}-l_{1+j}^{f,\mathcal{D}}$.}
         $\phi_{j}(\mathcal{D}) = \max(\{\phi_{j,f}(\mathcal{D})~|~f \in F_i\} 
   \cup \{\phi_{s_i}\})$\tcp*{Use~\cref{lemma:order}}
   $\Delta(\mathcal{D}) += \phi_{j}(\mathcal{D})$\\
 }}
 $\Delta(\mathcal{D}) = \frac{1}{|X|}\Delta(\mathcal{D})$\\
  \Return
  $\Delta_{s,-}(\mathcal{D}),\Delta_{s,+}(\mathcal{D}),\Delta(\mathcal{D})$}
\Return $\Delta_{s,-}(\mathcal{D}),\Delta_{s,+}(\mathcal{D})$
\end{algorithm}

\subsection{Computing Intrinsic Dimension via Support Sequences}
Equipped with~\cref{monoton}, we can bound $\Delta(\mathcal{D})$ and thus the
intrinsic dimension through computing $\phi_{s_i}$ for a few
$2=s_1<s_2\dots<s_l=|X|$.

\begin{definition}(Support Sequences and Upper / Lower ID)\label{def:support}
  Let $s=(2=s_1, \dots,s_{l-1},s_l=|X|)$ be a strictly increasing and
  finite sequence of natural numbers. We call $s$ a \emph{support
    sequence} of $\mathcal{D}$. We additionally define
  \begin{align}
    \label{eq:u_p_delta}
    \begin{split}
      \Delta_{s,-}(\mathcal{D})&\coloneqq \frac{1}{|X|}
      \left(\sum_{i=1}^l\phi_{s_i}(\mathcal{D}) +
      \sum_{i=1}^{l-1}\sum_{s_i<j<s_{i+1}}\phi_{s_i}(\mathcal{D})\right),\\
      \Delta_{s,+}(\mathcal{D})&\coloneqq \frac{1}{|X|}
     \left( \sum_{i=1}^l\phi_{s_i}(\mathcal{D}) + \sum_{i=1}^{l-1}\sum_{s_i<j<s_{i+1}}\phi_{s_{i+1}}(\mathcal{D})\right)
    \end{split}
  \end{align}
  and call accordingly $\partial_{s,-}(\mathcal{D})\coloneqq
  \frac{1}{\Delta_{s,+}(\mathcal{D})^2}$ the \emph{lower intrinsic
    dimension} of $\mathcal{D}$ and $\partial_{s,+}(\mathcal{D})\coloneqq
  \frac{1}{\Delta_{s,-}(\mathcal{D})^2}$ the \emph{upper intrinsic
    dimension} of $D$.
\end{definition}

The governing idea is for $i \in \{1,\dots,l\}$ and $j$ with $s_i<j<s_{i+1}$ to
substitute $\phi_j(\mathcal{D})$ with
$\phi_{s_i}(\mathcal{D})$ or $\phi_{s_{i+1}}(\mathcal{D})$. With~\cref{monoton} this results in  lower and upper bounds for
$\Delta(\mathcal{D})$ and thus for the intrinsic ID. By comparing upper and lower bounds, we can approximate
the ID and estimate the approximation
error.

\begin{corollary}
  \label{bound}
  For support sequences $s$ holds
  $\Delta_{s,-}(\mathcal{D}) \leq \Delta(\mathcal{D}) \leq
  \Delta_{s,+}(\mathcal{D})$ and $\partial_{s,-}(\mathcal{D}) \leq \partial(\mathcal{D}) \leq
  \partial_{s,+}(\mathcal{D})$.
\end{corollary}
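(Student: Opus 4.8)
The plan is to combine the closed form for $\Delta(\mathcal{D})$ from \cref{eq:delta-short} with the monotonicity of $i\mapsto\phi_{i,f}(\mathcal{D})$ and $i\mapsto\phi_i(\mathcal{D})$ established in \cref{monoton}. First I would note that, because $s_1=2$ and $s_l=|X|$, the index set $\{2,\dots,|X|\}$ is the disjoint union of $\{s_1,\dots,s_l\}$ and the ``gap'' sets $\{j\mid s_i<j<s_{i+1}\}$ for $i\in\{1,\dots,l-1\}$. Splitting the sum in \cref{eq:delta-short} along this decomposition gives
\begin{equation*}
  \Delta(\mathcal{D}) = \frac{1}{|X|}\left(\sum_{i=1}^{l}\phi_{s_i}(\mathcal{D}) + \sum_{i=1}^{l-1}\sum_{s_i<j<s_{i+1}}\phi_{j}(\mathcal{D})\right),
\end{equation*}
which is precisely the shape of the two quantities in \cref{def:support}, except that the gap terms carry $\phi_j(\mathcal{D})$ rather than $\phi_{s_i}(\mathcal{D})$ or $\phi_{s_{i+1}}(\mathcal{D})$.

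Next I would bound each gap term. Fix $i$ and $j$ with $2\le s_i<j<s_{i+1}$. Then \cref{monoton} (applied twice, once as $s_i<j$ and once as $j<s_{i+1}$) yields $\phi_{s_i}(\mathcal{D})\le\phi_j(\mathcal{D})\le\phi_{s_{i+1}}(\mathcal{D})$. Replacing each $\phi_j(\mathcal{D})$ in the displayed identity by its lower bound $\phi_{s_i}(\mathcal{D})$ produces exactly $\Delta_{s,-}(\mathcal{D})$, and replacing it by its upper bound $\phi_{s_{i+1}}(\mathcal{D})$ produces exactly $\Delta_{s,+}(\mathcal{D})$. Since the finite sums are monotone under termwise comparison, this gives $\Delta_{s,-}(\mathcal{D})\le\Delta(\mathcal{D})\le\Delta_{s,+}(\mathcal{D})$.

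Finally, for the intrinsic-dimension chain I would invoke that $t\mapsto t^{-2}$ is strictly decreasing on $(0,\infty)$. The one point that needs a line of care is positivity: since $\phi_{s_l}(\mathcal{D})=\phi_{|X|}(\mathcal{D})=\diam(X,d_F)>0$ for any non-trivial finite geometric dataset, we have $\Delta_{s,-}(\mathcal{D})\ge\frac{1}{|X|}\phi_{s_l}(\mathcal{D})>0$, so all three quantities $\Delta_{s,-}(\mathcal{D})\le\Delta(\mathcal{D})\le\Delta_{s,+}(\mathcal{D})$ lie in $(0,\infty)$. Applying the decreasing map then gives $\Delta_{s,+}(\mathcal{D})^{-2}\le\Delta(\mathcal{D})^{-2}\le\Delta_{s,-}(\mathcal{D})^{-2}$, which by the definitions $\partial_{s,-}(\mathcal{D})=\Delta_{s,+}(\mathcal{D})^{-2}$, $\partial(\mathcal{D})=\Delta(\mathcal{D})^{-2}$, $\partial_{s,+}(\mathcal{D})=\Delta_{s,-}(\mathcal{D})^{-2}$ is exactly $\partial_{s,-}(\mathcal{D})\le\partial(\mathcal{D})\le\partial_{s,+}(\mathcal{D})$.

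There is no substantial obstacle here: the argument is bookkeeping on the index decomposition plus a single application of \cref{monoton}. The only places demanding attention are (i) verifying that the decomposition of $\{2,\dots,|X|\}$ matches the one hard-coded in \cref{def:support}, so that the two substitutions reproduce $\Delta_{s,-}$ and $\Delta_{s,+}$ verbatim rather than merely something comparable, and (ii) the brief positivity check that legitimizes passing from the $\Delta$-inequalities to the $\partial$-inequalities.
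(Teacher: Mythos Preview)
Your proposal is correct and follows exactly the approach the paper intends: the paper does not give a separate proof of this corollary at all, merely remarking before stating it that substituting $\phi_j(\mathcal{D})$ by $\phi_{s_i}(\mathcal{D})$ or $\phi_{s_{i+1}}(\mathcal{D})$ together with \cref{monoton} yields lower and upper bounds for $\Delta(\mathcal{D})$ and hence for the intrinsic dimension. Your write-up simply spells out this bookkeeping (including the positivity check for the $\partial$-chain) that the paper leaves implicit.
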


\begin{definition}[Approximation Error]\label{def:error}
  For a support sequence $s$ the \emph{(relative) approximation error}
  of $\partial(\mathcal{D})$ with respect to $s$ is given by
  \begin{equation*}
   \E(s,\mathcal{D}) \coloneqq \frac{\partial_{s,+}(\mathcal{D})-\partial_{s,-}(\mathcal{D})}{\partial_{s,-}(\mathcal{D})}.
  \end{equation*}
\end{definition}

With the computation of the upper and lower ID it is possible to
bound the error with respect to the ID $\partial(\mathcal{D})$. The following
corollary can be deduced from~\cref{bound} and~\cref{def:error}.

\begin{corollary}\label{cor:error}
  For a support sequence $s$ the following statements hold.
  \begin{enumerate}
  \item
    $\max\{\frac{\partial_{s,+}(\mathcal{D})-\partial(\mathcal{D})}{\partial(\mathcal{D})},
    \frac{\partial(\mathcal{D})-\partial_{s,-}(\mathcal{D})}{\partial_{s,-}(\mathcal{D})}\},
    \leq \E(s,\mathcal{D})$,
  \item $\max\{|\partial_{s,+}(\mathcal{D})-\partial(\mathcal{D})|,
    |\partial(\mathcal{D})-\partial_{s,-}(\mathcal{D})|\} \leq |\partial_{s,+}(\mathcal{D})-\partial_{s,-}(\mathcal{D})|$.
  \end{enumerate}
\end{corollary}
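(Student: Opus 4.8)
The whole statement reduces to the sandwiching $\partial_{s,-}(\mathcal{D}) \le \partial(\mathcal{D}) \le \partial_{s,+}(\mathcal{D})$ supplied by \cref{bound}, combined with the elementary monotonicity of the ratios $t \mapsto (c-t)/t$ and $t \mapsto (t-c)/c$ on the positive half-line. The plan is first to record that, for a nontrivial finite geometric dataset, all three numbers $\partial_{s,-}(\mathcal{D})$, $\partial(\mathcal{D})$, $\partial_{s,+}(\mathcal{D})$ are strictly positive and finite: indeed $\Delta(\mathcal{D}) > 0$ because $F$ discriminates the points of $X$, and $0 < \Delta_{s,-}(\mathcal{D}) \le \Delta(\mathcal{D}) \le \Delta_{s,+}(\mathcal{D}) < \infty$ by \cref{bound}, so the reciprocal squares defining the three $\partial$'s are well defined and positive. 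This positivity is the only point requiring any care, and it is where I expect the sole (very mild) obstacle to sit, since every subsequent step is a division by a quantity that must be known to be positive with the correct inequality direction.

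For the second claim I would argue directly: by \cref{bound} both $\partial_{s,+}(\mathcal{D}) - \partial(\mathcal{D})$ and $\partial(\mathcal{D}) - \partial_{s,-}(\mathcal{D})$ are nonnegative, so the absolute values may be dropped; and each of them is at most $\partial_{s,+}(\mathcal{D}) - \partial_{s,-}(\mathcal{D}) = |\partial_{s,+}(\mathcal{D}) - \partial_{s,-}(\mathcal{D})|$, the first because $\partial(\mathcal{D}) \ge \partial_{s,-}(\mathcal{D})$ and the second because $\partial(\mathcal{D}) \le \partial_{s,+}(\mathcal{D})$. Taking the maximum over the two terms yields the asserted bound.

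For the first claim I would split the maximum into its two arguments and bound each separately by $\E(s,\mathcal{D}) = \frac{\partial_{s,+}(\mathcal{D}) - \partial_{s,-}(\mathcal{D})}{\partial_{s,-}(\mathcal{D})}$. For the term $\frac{\partial(\mathcal{D}) - \partial_{s,-}(\mathcal{D})}{\partial_{s,-}(\mathcal{D})}$ it suffices to replace $\partial(\mathcal{D})$ by the larger quantity $\partial_{s,+}(\mathcal{D})$ in the numerator, the denominator being a fixed positive number. For the term $\frac{\partial_{s,+}(\mathcal{D}) - \partial(\mathcal{D})}{\partial(\mathcal{D})}$ I would rewrite it as $\frac{\partial_{s,+}(\mathcal{D})}{\partial(\mathcal{D})} - 1$ and use $\partial(\mathcal{D}) \ge \partial_{s,-}(\mathcal{D}) > 0$ to get $\frac{\partial_{s,+}(\mathcal{D})}{\partial(\mathcal{D})} \le \frac{\partial_{s,+}(\mathcal{D})}{\partial_{s,-}(\mathcal{D})}$, whence $\frac{\partial_{s,+}(\mathcal{D}) - \partial(\mathcal{D})}{\partial(\mathcal{D})} \le \frac{\partial_{s,+}(\mathcal{D})}{\partial_{s,-}(\mathcal{D})} - 1 = \E(s,\mathcal{D})$. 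Both arguments of the maximum are then $\le \E(s,\mathcal{D})$, which is exactly the claim. No estimate beyond \cref{bound} and \cref{def:error} is needed; the proof is essentially a bookkeeping exercise with monotone fractions, the only thing to watch being that all divisions are by the positive quantities $\partial(\mathcal{D})$ and $\partial_{s,-}(\mathcal{D})$.
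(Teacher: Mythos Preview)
Your proposal is correct and matches the paper's own treatment: the paper does not spell out a proof but simply states that the corollary ``can be deduced from~\cref{bound} and~\cref{def:error},'' which is precisely the route you take. Your careful check that $\partial_{s,-}(\mathcal{D})$ and $\partial(\mathcal{D})$ are strictly positive (so the divisions are legitimate) is a welcome bit of rigor beyond what the paper records.
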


If the error of the approximation of a specific support sequence is not
sufficient, further elements can be added to the support sequence. Directly
from~\cref{eq:u_p_delta} follows the following corollary.

\begin{corollary}
Let $s=(2=s_1,\dots,s_l=|X|)$ be a support sequence and let
$\hat{s}=(s_1,\dots,s_{i},p,s_{i+1},\dots,s_l)$ with a support
sequence with an additional element $p$. Then it holds that
\begin{align*}
  \Delta(\mathcal{D})_{\hat{s},-} &=  \Delta(\mathcal{D})_{s,-} + \frac{\sum_{p\leq j<s_{i+1}}\left((\phi_p(\mathcal{D})) - \phi_{s_i}(\mathcal{D})\right)}{|X|},\\
  \Delta(\mathcal{D})_{\hat{s},+} &=  \Delta(\mathcal{D})_{s,+} + \frac{\sum_{s_i < j \leq p}\left((\phi_p(\mathcal{D})) - \phi_{s_{i+1}}(\mathcal{D})\right)}{|X|}.
\end{align*}
\end{corollary}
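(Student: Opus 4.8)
The plan is to prove both identities by directly expanding $\Delta_{\hat s,\pm}(\mathcal{D})$ and $\Delta_{s,\pm}(\mathcal{D})$ from \cref{eq:u_p_delta} and cancelling the blocks on which the two support sequences agree. The convenient first step is to rewrite the defining double sums as a single sum of a step function over all indices $k\in\{2,\dots,|X|\}$: grouping the two sums in \cref{eq:u_p_delta} — the first sum accounting for the case $k=s_j$ and the inner double sum accounting for the gap indices $s_j<k<s_{j+1}$ — one obtains
\begin{equation*}
  |X|\,\Delta_{s,-}(\mathcal{D}) = \sum_{k=2}^{|X|}\phi_{s_{\iota(k)}}(\mathcal{D}),
  \qquad
  |X|\,\Delta_{s,+}(\mathcal{D}) = \sum_{k=2}^{|X|}\phi_{s_{\kappa(k)}}(\mathcal{D}),
\end{equation*}
where $\iota(k)\coloneqq\max\{i~|~s_i\leq k\}$ and $\kappa(k)\coloneqq\min\{i~|~s_i\geq k\}$. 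In words, $\Delta_{s,-}$ is obtained from $\Delta(\mathcal{D})=\frac{1}{|X|}\sum_{k=2}^{|X|}\phi_k(\mathcal{D})$ by replacing each $\phi_k(\mathcal{D})$ with its value at the closest support point weakly to the left, and $\Delta_{s,+}$ by the value at the closest support point weakly to the right.

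Next I would compare $s$ with $\hat s=(s_1,\dots,s_i,p,s_{i+1},\dots,s_l)$. Since they differ only by the insertion of $p$ strictly between $s_i$ and $s_{i+1}$, the maps $\iota$ and $\kappa$, and hence the two step functions above, change only for indices $k$ with $s_i\leq k\leq s_{i+1}$, and there the change is explicit. For the lower bound: $\phi_{s_{\iota(\cdot)}}$ is unchanged for $s_i\leq k<p$ and jumps from $\phi_{s_i}(\mathcal{D})$ to $\phi_p(\mathcal{D})$ for $p\leq k<s_{i+1}$, so summing the differences gives $|X|\bigl(\Delta_{\hat s,-}(\mathcal{D})-\Delta_{s,-}(\mathcal{D})\bigr)=\sum_{p\leq j<s_{i+1}}\bigl(\phi_p(\mathcal{D})-\phi_{s_i}(\mathcal{D})\bigr)$. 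For the upper bound: $\phi_{s_{\kappa(\cdot)}}$ drops from $\phi_{s_{i+1}}(\mathcal{D})$ to $\phi_p(\mathcal{D})$ for $s_i<k\leq p$ and is unchanged for $p<k\leq s_{i+1}$, so $|X|\bigl(\Delta_{\hat s,+}(\mathcal{D})-\Delta_{s,+}(\mathcal{D})\bigr)=\sum_{s_i<j\leq p}\bigl(\phi_p(\mathcal{D})-\phi_{s_{i+1}}(\mathcal{D})\bigr)$. Dividing by $|X|$ yields the two stated formulas.

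The only subtle point, and hence the main obstacle, is keeping the bookkeeping of the newly inserted support point $p$ consistent at the split: in the expansion of $|X|\Delta_{\hat s,\pm}(\mathcal{D})$ the value $\phi_p(\mathcal{D})$ contributes once through the ``support-point'' sum $\sum_i\phi_{s_i}(\mathcal{D})$, while $p$ is simultaneously the right endpoint of the new gap $(s_i,p)$ and the left endpoint of the new gap $(p,s_{i+1})$. Tracking precisely whether the index $j=p$ is weighted by $\phi_p(\mathcal{D})$, by $\phi_{s_i}(\mathcal{D})$, or by $\phi_{s_{i+1}}(\mathcal{D})$ is exactly what forces the summation ranges to be the half-open intervals $p\leq j<s_{i+1}$ and $s_i<j\leq p$ rather than symmetric ones. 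Once this is pinned down, the remainder is a routine rearrangement of finite sums using only the definition in \cref{eq:u_p_delta}.
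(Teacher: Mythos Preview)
Your proof is correct and follows exactly the route the paper intends: the paper gives no proof of this corollary at all, stating only that it follows directly from \cref{eq:u_p_delta}. Your step-function rewriting $|X|\,\Delta_{s,\pm}(\mathcal{D})=\sum_{k=2}^{|X|}\phi_{s_{\iota(k)}}(\mathcal{D})$ (resp.\ $\phi_{s_{\kappa(k)}}$) is a clean way to make that ``direct'' computation explicit, and your endpoint bookkeeping at $j=p$ is handled correctly.
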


For a given support sequence $s$, \cref{cor:error} gives us an upper bound for
the error when $\partial_{s,+}(\mathcal{D})$ or $\partial_{s,-}(\mathcal{D})$
are used to approximate $\partial(\mathcal{D})$ without knowing
$\partial(\mathcal{D})$. Hence, we can compute (a lower bound) for the accuracy
when approximating the ID with~\cref{def:error}. As we can see
in~\cref{sec:sups}, \cref{sec:large} and~\cref{sec:random}, comparable small
support sequences lead to sufficient approximations.  Support sequences can also
be used to shorten the computation of the exact intrinsic dimension as the
following lemma shows.

\begin{lemma}\label{lemma:order}
 Let $s=(2=s_1,\dots,s_l=|X|)$ be a support sequence. Furthermore, let
 $i \in
 \{1,\dots,l-1\}$ and let $j \in\mathbb{N}$ with $s_i<j<s_{i+1}$. Let $F_{s_i}\coloneqq \{f \in F~|~
 \phi_{s_{i+1}, f}(\mathcal{D}) >
 \phi_{s_i}(\mathcal{D})\}$.
 Then it holds that
 \begin{equation*}
   \phi_{j}(\mathcal{D}) = \max(\{\phi_{j,f}(\mathcal{D})~|~f \in F_{s_i}\} 
   \cup \{\phi_{s_i}(\mathcal{D})\}).
 \end{equation*}
\end{lemma}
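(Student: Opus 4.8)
The plan is to prove the claimed equality by two inequalities, using nothing beyond the monotonicity statement \cref{monoton} and the defining formulas $\phi_j(\mathcal{D}) = \max_{f \in F}\phi_{j,f}(\mathcal{D})$ and $F_{s_i} = \{f \in F \mid \phi_{s_{i+1},f}(\mathcal{D}) > \phi_{s_i}(\mathcal{D})\}$. First I would record the two consequences of \cref{monoton} that are available because $2 \leq s_i < j < s_{i+1}$: for every $f \in F$ we have $\phi_{j,f}(\mathcal{D}) \leq \phi_{s_{i+1},f}(\mathcal{D})$ (first item of \cref{monoton}, which applies since $j > s_i \geq 2$), and $\phi_j(\mathcal{D}) \geq \phi_{s_i}(\mathcal{D})$ (second item of \cref{monoton}, again since $j > s_i \geq 2$).

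For the direction ``$\geq$'': since $F_{s_i} \subseteq F$, the definition of $\phi_j(\mathcal{D})$ as a maximum over all of $F$ gives $\phi_j(\mathcal{D}) \geq \phi_{j,f}(\mathcal{D})$ for every $f \in F_{s_i}$; together with $\phi_j(\mathcal{D}) \geq \phi_{s_i}(\mathcal{D})$ this yields $\phi_j(\mathcal{D}) \geq \max(\{\phi_{j,f}(\mathcal{D}) \mid f \in F_{s_i}\} \cup \{\phi_{s_i}(\mathcal{D})\})$.

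For the direction ``$\leq$'': choose $f^\ast \in F$ with $\phi_j(\mathcal{D}) = \phi_{j,f^\ast}(\mathcal{D})$ and split into cases. If $f^\ast \in F_{s_i}$, then $\phi_j(\mathcal{D}) = \phi_{j,f^\ast}(\mathcal{D})$ is literally one of the terms appearing in the right-hand maximum, so $\phi_j(\mathcal{D}) \leq \max(\{\phi_{j,f}(\mathcal{D}) \mid f \in F_{s_i}\} \cup \{\phi_{s_i}(\mathcal{D})\})$. If $f^\ast \notin F_{s_i}$, then by definition of $F_{s_i}$ we have $\phi_{s_{i+1},f^\ast}(\mathcal{D}) \leq \phi_{s_i}(\mathcal{D})$, and hence $\phi_j(\mathcal{D}) = \phi_{j,f^\ast}(\mathcal{D}) \leq \phi_{s_{i+1},f^\ast}(\mathcal{D}) \leq \phi_{s_i}(\mathcal{D})$, so again $\phi_j(\mathcal{D})$ is bounded by a term in the right-hand maximum. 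Combining the two cases gives ``$\leq$'', and together with ``$\geq$'' this establishes the equality.

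I expect no real obstacle here; the only points requiring care are checking that the index hypotheses of \cref{monoton} are satisfied (both indices involved are $\geq 2$ because $s_i \geq s_1 = 2$ and $j > s_i$) and carrying out the case distinction on membership of the maximizing feature $f^\ast$ in $F_{s_i}$ without gaps. The conceptual content is simply that features $f$ with $\phi_{s_{i+1},f}(\mathcal{D}) \leq \phi_{s_i}(\mathcal{D})$ cannot raise $\phi_j(\mathcal{D})$ above $\phi_{s_i}(\mathcal{D})$ on the interval $s_i < j < s_{i+1}$, by the sandwiching $\phi_{s_i}(\mathcal{D}) \leq \phi_{j,f}(\mathcal{D}) \leq \phi_{s_{i+1},f}(\mathcal{D})$ coming from \cref{monoton}.
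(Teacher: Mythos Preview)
Your proposal is correct and follows essentially the same argument as the paper: ``$\geq$'' comes from \cref{monoton} and the definition of $\phi_j(\mathcal{D})$, and ``$\leq$'' from the observation that any $f \notin F_{s_i}$ satisfies $\phi_{j,f}(\mathcal{D}) \leq \phi_{s_{i+1},f}(\mathcal{D}) \leq \phi_{s_i}(\mathcal{D})$. (One small slip in your closing commentary: the inequality $\phi_{s_i}(\mathcal{D}) \leq \phi_{j,f}(\mathcal{D})$ you mention in the ``sandwiching'' need not hold for a fixed $f$---only $\phi_{s_i,f}(\mathcal{D}) \leq \phi_{j,f}(\mathcal{D})$ does---but this is not used in your actual proof.)
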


\begin{proof}
  ``$\geq$'' follows from~\cref{monoton} and the definition of
  $\phi_{j}(\mathcal{D})$. ``$\leq$'' holds because for $f \in
  F\setminus F_{s_i}$ it holds that $\phi_{j,f}(\mathcal{D}) \leq
  \phi_{s_{i+1},f}(\mathcal{D})$, due to~\cref{monoton}, and
  $\phi_{s_{i+1}, f}(\mathcal{D}) \leq \phi_{s_i}(\mathcal{D})$, due to
  the construction of $F_{s_i}$.
\end{proof}

Hence, given a specific $j$, it is possible to compute
$\phi_j(\mathcal{D})$ using a subset of $F$. Based on the particular
GD $\mathcal{D}$, this fact can considerably speed up the computation
of the ID of $\mathcal{D}$, as we will see in~\cref{sec:gnns}.

An algorithm to approximate and compute the ID through support sequences is
depicted in~\cref{alg:algo2}. This algorithm takes as input a GD $\mathcal{D}$
and a chosen support sequence $s$. A reasonable choice for support sequences is
discussed in~\cref{sec:sups}. The output is
$\Delta_{s,-}(\mathcal{D}),\Delta_{s,+}(\mathcal{D})$, and
$\Delta(\mathcal{D})$, if desired (Line 14). In Line 2, all feature sequences
are computed. In Line 6 to Line 11, $\phi_{s_i}(\mathcal{D})$ and $F_{s_{i-1}}$, as defined
in~\cref{lemma:order}, are computed. From Line 1 to Line 13, the feature
sequences and the lower and upper ID are computed. If desired, the exact
computation is done in Line 15 to Line 21. Here, we iterate for all support
elements (Line 15) through all ``gaps'' between them (Line 16) and compute
$\phi_{j}(\mathcal{D})$ using~\cref{lemma:order}~(Line 17 to Line 19).

\subsection{Estimating Computational Costs}
\label{sec:costs}

Let $s=(s_1,\dots,s_l)$ be a support sequence. After the computation of
$F_{s_1},\dots,F_{s_{l-1}},$ we can estimate how much computation steps we can avoid
in order to compute $\partial (\mathcal{D})$ with~\cref{alg:algo2} compared
to~\cref{alg:algo1}. Together with the error function $E(s)$, this estimation
can help us to decide if it is desirable to compute the exact value $\partial
(\mathcal{D})$ or leave it at $\partial_{s,-}(\mathcal{D})$ and
$\partial_{s,+}(\mathcal{D})$. This is done in the following manner. For a
specific $f \in F$, \cref{lemma:k} shows that the computation of
$\phi_{k,f}(\mathcal{D})$ requires $|X|-k+1$ different subtractions and to keep
the minimum value. Hence, the cost for computing $\Delta(\mathcal{D})$ and
therefore $\partial (\mathcal{D})$ via~\cref{alg:algo1} can be estimated via
$\mathcal{O} (|F| \sum_{k=2}^{|X|}(|X|-k+1)) = \mathcal{O}(|F|
\sum_{k=1}^{|X|-1} k )= \mathcal{O}(|F|(\frac{|X|^2 - |X|}{2}))$. However, if we
use~\cref{alg:algo2}, we solely have the cost to compute $\phi_{s_i}$. For all
values $j$ with $s_i<j<s_{i+1}$, our cost estimation is  $|F_{s_i}|( |X|-j+1)$.
Hence, for a given support sequence $s=(s_1,\dots,s_l)$, we can estimate how
many computations are saved using the following notions.

We address the \emph{naive computation costs} for computing the ID of
a GD with
  \begin{equation*}
    \costs (\mathcal{D}) \coloneqq |F|(\frac{|X|^2 - |X|}{2}).
  \end{equation*}
  In contrast, for a support sequence $s=(s_1,\dots,s_l)$ of $\mathcal{D}$, the \emph{computation
    costs} are
  \begin{equation}\label{eq:costs}
    \costs_s(\mathcal{D})\coloneqq (|F| \sum_{k=1}^l |X|-s_k+1) +
    \sum_{k=1}^{l-1}|F_{s_k}|\sum_{s_k <j< s_{k+1}}|X|-j+1.
  \end{equation}
  Hence, the \emph{saved costs} of $s$ are
  \begin{equation*}
    \scosts_s(\mathcal{D}) \coloneqq 1 - \frac{\costs_s(\mathcal{D})}{\costs (\mathcal{D})}.  
  \end{equation*}
Once we have computed $\phi_{s_i}(\mathcal{D})$ and $F_i$, depending
on the
saved costs, we can decide to discard the support sequence or to
continue further computations with it. Furthermore, using the error
estimation, we can decide to compute the exact ID or to settle with
the approximation.

\section{Intrinsic Dimension of Graph Data}
\label{sec:gnns}

\begin{table}[t]
  \centering
    \caption{Statistics of all datasets used in this work.}
  \label{tab:data}

  \begin{tabular}[h]{l|lll}
    &Nodes &Edges& Attributes \\
    \midrule
    PubMed &$ 19,717 $&$ 88,648$&$ 500$\\
    Cora &$ 2,708 $&$ 10,556 $&$ 1,433$ \\
    CiteSeer &$ 3,327 $&$ 9,104 $&$ 3,703$ \\
    ogbn-arxiv &$ 169,343 $&$ 1,166,243 $&$ 128$ \\
    ogbn-products &$ 2,449,029 $&$ 61,859,140 $&$ 100$ \\
    ogbn-mag &$ 1,939,743 $&$ 21,111,007 $&$ 128$ \\
    ogbn-papers100M & $111,059,956$ & $1,615,685,872$ & $128$
  \end{tabular}
\end{table}

\begin{figure}[t]
  \centering
  \includegraphics[width=\linewidth]{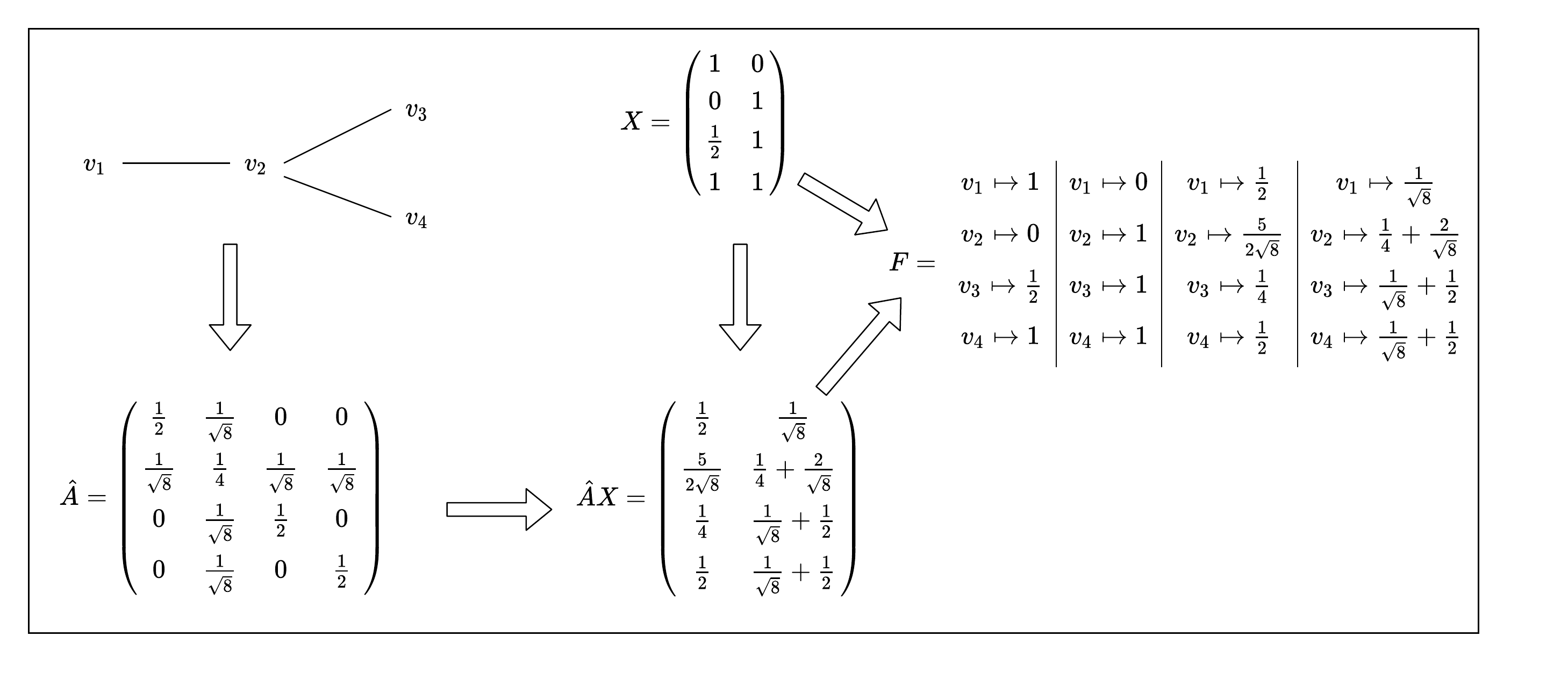}
  \caption{Example of an $k$-hop geometric dataset with $k=1$. Given
    are a graph and an attribute matrix $X$. Then, the normalized
    adjacency matrix $\hat{A}$ and then $\hat{A}X$ are computed. The
    feature set $F$ consists of
  the coordinate projections of $X$ and $\hat{A}X$. In the figure, every column
  after ``$F=$'' represents one $f \in F$. Note, that in this
example the normalization factor $\frac{1}{d_{\max}}$ is $1$.}
  \label{fig:graph}
\end{figure}

Graph data is of major interest in the realm of geometric learning and
beyond. In the following, a \emph{graph dataset} $D=(\mathcal{X},G)$ consists of
an undirected, unweighted graph $G=(V,E)$, where $V=\{v_1,\dots,v_n\}$
is a finite set of vertices, $E \subseteq \binom{V}{2}$ and $\mathcal{X} \in
\mathbb{R}^{n \times d}$ is a \emph{$d$-dimensional attribute matrix}. The row-vector
$\mathcal{X}_i=(\mathcal{X}_{i,1},\dots,\mathcal{X}_{i,d})$ is called the \emph{attribute vector} of
$v_i$.

Learning from such data is often done via \emph{graph neural networks}. The idea
is to extend common multi-layer perceptrons by a so called \emph{neighborhood
aggregation}, where internal representations of graph neighbors are combined at
specific layers. In earlier works, neighborhood aggregation is done at multiple
layers~\citep{kipf17, hamilton17, velickovic18}. Due to scalability, recent
approaches perform multiple iterations of neighborhood aggregation as a
preprocessing step and then use the aggregated features as combined
input~\citep{rossi20, sun21, zhang21}. To be more specific, the employed networks use inputs of the
form 
\begin{equation}
  \label{eq:input}
  (\mathcal{X}, \hat{A}\mathcal{X},\hat{A}^{2}\mathcal{X} \dots,
  \hat{A}^kX),  
\end{equation}
 where $\hat{A}$ is a \emph{transition matrix} that is
derived from the graph structure. The most common choice for such a
matrix is the normalized adjacency matrix, i.e., $\hat{A}_{i,j}
=(\sqrt{\deg(v_j)\deg(v_i)})^{-1}$ if $v_j \in N(v_i)$ and
$\hat{A}_{i,j}=0$ else.
Here, $N(v_i) \coloneqq \{v_j \in V~|~\{v_i,v_j\} \in E\} \cup \{v_i\}$ is
the set of neighbors of $v_i$ and $\deg(v_i) \coloneqq |N(v_i)|$ is
the node degree of $v_i$. The feature set of the following geometric dataset
corresponds to the input in~\cref{eq:input}.

\begin{definition}\label{def:k-hop}
  Let $k \in \mathbb{N}$ and $\hat{A}$ be the normalized adjacency
  matrix of a graph dataset $D$. Furthermore, let $d_{\max} \coloneqq \max_{j \in \{1,\dots,d\}}\max_{i,k \in
    \{1,\dots,n\}}|\mathcal{X}_{i,j} - \mathcal{X}_{k,j}|$. We call the set
  \[F_{D,k} \coloneqq\{v_i \mapsto \frac{1}{d_{\max}} (\hat{A}^m \mathcal{X})_{i,j}~|~m \in \{0,\dots,k\}, j \in
  \{1, \dots, d\}\}\] the \emph{$k$-hop feature functions} of $D$. Let $\nu$ be the normalized
  counting measure on $V$.
If
there exist for each $v_i,v_k \in V$ with $v_i \neq v_j$ elements $m \in \{0,\dots,k\}, j \in
  \{1, \dots, d\}$ such that $\frac{1}{d_{\max}} (\hat{A}^m \mathcal{X})_{i,j}
  \neq \frac{1}{d_{\max}} (\hat{A}^m \mathcal{X})_{k,j}$, then
  $\mathcal{D}_k=(V,F_{D,k},\nu)$ is a GD. We call it the \emph{$k$-hop geometric
    dataset} of $D$.
\end{definition}

Basic statistics of all seven graph datasets considered in the following sections
are depicted in~\cref{tab:data}. The statistics for \textbf{Cora},
      \textbf{PubMed} and \textbf{CiteSeer} were taken
      from PyTorch Geometric~\footnote{\url{https://pytorch-geometric.readthedocs.io/en/latest/modules/datasets.html\#torch\_geometric.datasets.Planetoid}}. The
    statistics of the OGB datasets were taken from the Open Graph
    Benchmark.~\footnote{\url{https://ogb.stanford.edu/docs/nodeprop}}
An example of a $k$-hop geometric dataset is depicted in~\cref{fig:graph}.
It is well-known that the normalized adjacency matrix $\hat{A} \in
\mathbb{R}^{n \times n}$ of a graph has a
spectral radius of 1. As $\hat{A}$
is symmetric, this yields $\|\hat{A}x - \hat{A}y\| \leq \|x -y\|$ for $x,y \in
\mathbb{R}^n$.
The significance of this property is for the respective computations,
however, limited, since it primarily leads to insights of the behavior of
the columns of $\mathcal{X}$ under multiplication with powers of $\hat{A}$. In
contrast, the attribute vectors of the vertices are represented via
the rows.
Moreover, we may point out that we are not considering the Euclidean
distances between attribute vectors, but differences between
coordinate values. Thus, the spectral radius of $\hat{A}$ does not
provide direct insights into $F_{D,k}$.

\subsection{Choosing Support Sequences}
\label{sec:sups}
\begin{figure}[t]
  \centering
  \includegraphics[width=\linewidth]{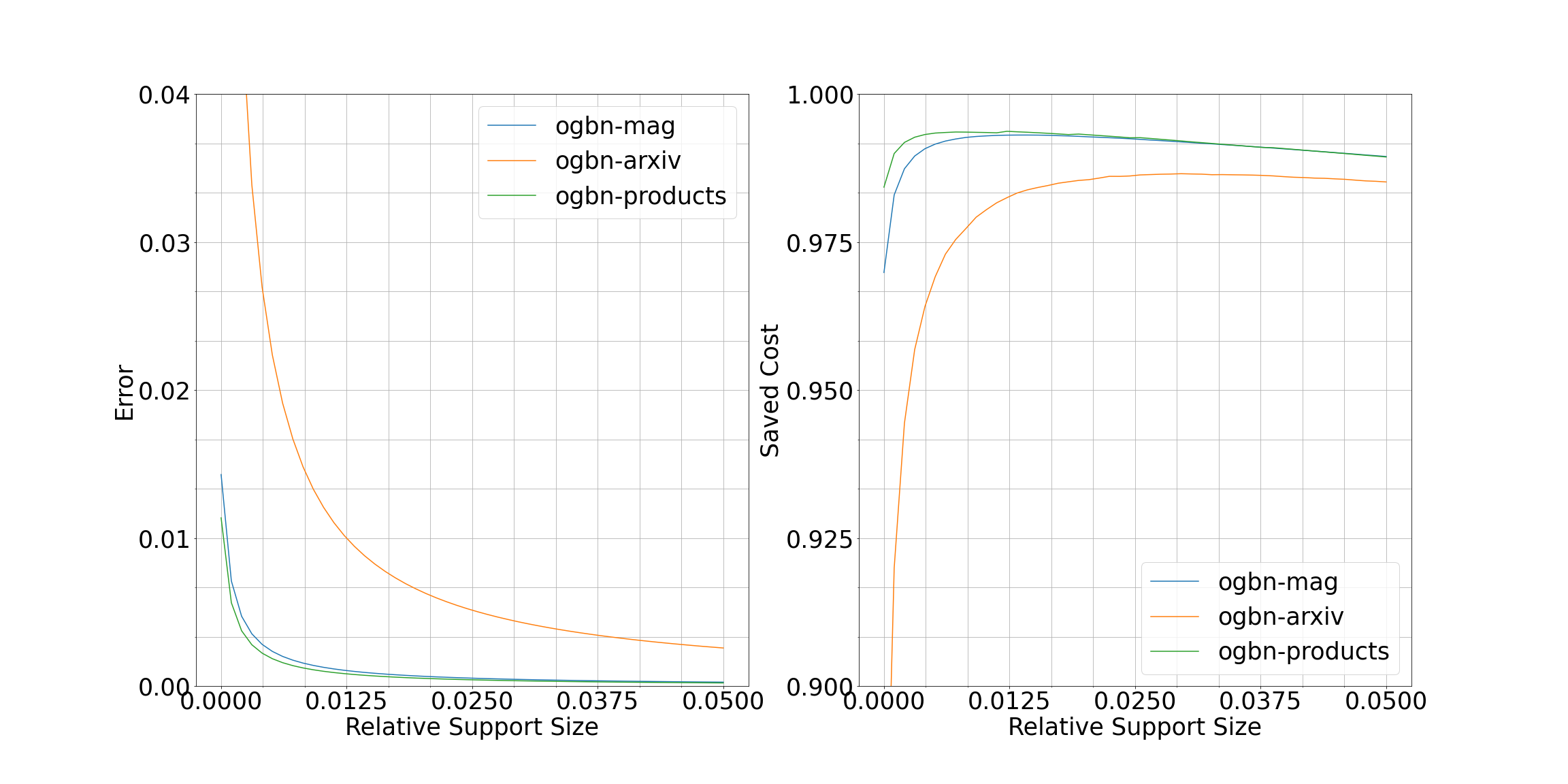}
\caption{Errors and saved costs for approximating and computing the intrinsic dimensionality for $2-hop$ geometric
datasets with different lengths of the support sequence.}
  \label{fig:cs}
\end{figure}

\cref{alg:algo2} relies on a proper choice for a support sequence $s$. To choose
$s$, two properties have to be considered. Namely, the length of
the support sequence and the spacing of the elements. Regarding
the second point, we decided to use log-scale spacing. To get such a support
sequence for a geometric dataset $\mathcal{D}=(X,F,\mu)$, we first choose a
geometric sequence $\hat{s}=(s_1,\dots s_l)$ of length $l$ from $|X|$ to $2$. We
derive the final support sequence $s$ from $s'=(\lfloor |X|+2 - s_1 \rfloor,
\dots, \lfloor |X| + 2 - s_l \rfloor)$ by removing duplicated elements.

In the following, we study the error and the saved costs for
different lengths $l$ of the support sequence. Here, for a geometric dataset, we investigate how
$\E(s, \mathcal{D})$ and $\scosts(s)$ vary for $s$ chosen  with
$l \in \{\lfloor0.001*|X|\rfloor, \lfloor0.002*|X|\rfloor, \dots,
\lfloor0.05*|X|\rfloor\}$. Here, if $l = \lfloor  r * |X|\rfloor $, we
call $r \in \mathbb{R}$ the \emph{relative support size} of the
resulting support sequence $s$. We experiment with
common benchmark datasets, namely
\textbf{ogbn-arxiv}, \textbf{ogbn-mag} and \textbf{ogbn-products} from
the Open Graph Benchmark~\citep{hu20, hu21}.  Since for
\textbf{ogbn-mag} only a subset of vertices is equipped with attribute
vectors, we generate the missing vectors via
metapath2vec~\citep{dong17}. For all datasets, we consider the $2$-hop geometric dataset.
The results are depicted in~\cref{fig:cs}.

\subsubsection{Results}
For all datasets, low errors and high saved costs
can be reached with a remarkably short support sequence. With relative
support sizes of under $0.015$ all datasets are approximated
with an accuracy of over $99 \%$. Furthermore, the saved costs for
sequences with comparable relative support sizes is over $0.98$. It
stands out, that for the larger datasets \textbf{ogbn-mag} and
\textbf{ogbn-products}, shorter sequences (relative to the size of the
dataset) lead to lower errors and higher saved costs then for
\textbf{ogbn-arxiv}. Our results further indicate, that a relative
support size between $0.01$ and $0.02$ is a reasonable range for
maximizing the saved costs. For longer support sequences, the saved
cost decrease while the error does not change dramatically, at least
for the $2$-hop geometric datasets of \textbf{ogbn-mag} and
\textbf{ogbn-products}. Note, that longer support sequence do not always lead to
a higher amount of saved costs. For longer support sequences $s$ the costs of
computing $\phi_k(\mathcal{D})$ for $k \not \in s$ decreases. However, the costs of computing
$\phi_{s_i}(\mathcal{D})$ for all elements $s_i \in s$ increase.

\subsection{Neighborhood Aggregation and Intrinsic
  Dimension}
\label{sec:ids}

\begin{table}[t]
  \caption{Intrinsic dimension and performances on classification
  tasks. In the upper table, we display IDs for all $k$-hop
  geometric datasets for $k \in \{0,\dots,5\}$. In the middle table,
  we display the ID estimated by the MLE baseline. In the lower table
  we display mean and standard derivations for test accuracy of a
  standard SIGN model on the classification tasks which belongs to
  the dataset.}
  \label{tab:k_hop_ids}
\centering\scalebox{0.8}{
  \begin{tabular}{l|p{2.5cm}p{2.5cm}p{2.5cm}p{2.5cm}p{2.5cm}p{2.5cm}}
    \backslashbox{Dataset}{$k$-hop}&$0$&$1$&$2$&$3$&$4$&$5$\\
    \midrule
      PubMed&$2542.3425$&$2336.6611$&$2077.5821$&$2077.0953$&$2077.0886$&$2077.0848$\\
      Cora&$6.2523$&$3.8324$&$3.6689$&$3.6627$&$3.6624$&$3.6623$\\
      CiteSeer&$22.3337$&$11.3166$&$10.2347$&$9.8134$&$9.5491$&$9.3795$\\
      ogbn-arxiv&$83.9160$&$31.4731$&$31.4731$&$31.4730$&$30.7370$&$30.3767$\\
    ogbn-products&$1,169,323.2496$&$1,169,044.4736$&$1,169,044.2216$&$1,169,044.2216$&$1,169,044.2216$&$1,169,044.2216$\\
    ogbn-mag&$2,311.3509$&$2,284.0290$&$2,284.0290$&$2,284.0290$&$2,284.0290$&$2,284.0290$
  \end{tabular}}
\\
\vspace{.3cm}
\centering\scalebox{0.8}{
  \begin{tabular}{l|p{2.5cm}p{2.5cm}p{2.5cm}p{2.5cm}p{2.5cm}p{2.5cm}}
    \backslashbox{Dataset}{$k$-hop}&$0$&$1$&$2$&$3$&$4$&$5$\\
    \midrule
      PubMed&$24.4623$&$24.7303$&$23.3924$&$22.2779$&$21.3495$&$20.5642$\\
      Cora&$30.6049$&$28.1785$&$19.9316$&$10.8186$&$9.2970$&$8.6155$\\
      CiteSeer&$58.9593$&$26.5031$&$16.5556$&$12.0495$&$9.3171$&$7.9572$\\
      ogbn-arxiv&$16.2948$&$19.8571$&$18.9068$&$18.2265$&$17.4905$&$16.9325$\\
    ogbn-products&$2.8694$&$4.7542$&$4.7950$&$4.7659$&$4.6943$&$4.6687$\\
    ogbn-mag&$30.7024$&$33.2848$&$31.5140$&$30.4844$&$29.9080$&$29.5956$
  \end{tabular}}
\\
\vspace{.3cm}
\centering\scalebox{0.8}{
\begin{tabular}{l|p{2.5cm}p{2.5cm}p{2.5cm}p{2.5cm}p{2.5cm}p{2.5cm}}
       \backslashbox{Dataset}{$k$-hop}&$0$&$1$&$2$&$3$&$4$&$5$\\
  \midrule
  PubMed&$.6850 \pm .0145$&.$7191 \pm .0123$ & $.7378 \pm .0362$
                                                  &.$7565 \pm .0165$
                                                      & $.7615 \pm
                                                        .0160$&
                                                                $.7571
                                                                \pm.0234$
  \\
  Cora & $.5329 \pm .0120 $ & $.7223 \pm .0117$ & $.7766 \pm
                                                  .0045$&$.7870 \pm
                                                         .0076$ &
                                                                  $.7917
                                                                  \pm
                                                                  .0084$
                                                          & $.7951
                                                            \pm
                                                            .0047$
  \\
  CiteSeer & $.4975 \pm .0075$& $.6165 \pm .0160$ & $.6530 \pm
                                                     .0101$ & $.6677
                                                              \pm
                                                              .0074$
                                                      & $.6695 \pm
                                                        .0085$ &
                                                                 $.6734
                                                                 \pm
                                                                 .0080$\\
  ogbn-arxiv & $.5341 \pm .0090$ & $.6572 \pm .0052$ & $.6903 \pm
                                                       .0056$ &
                                                               $.6917
                                                               \pm
                                                                .0074$
                                                      & $.6901 \pm
                                                        .0083$ &
                                                                 $.6890
                                                                 \pm
                                                                 .0051$
                                                                 \\
  ogbn-products & $.5969 \pm .0016$ & $.7204 \pm .0017$ & $.7590 \pm
                                                          .0017$ &
                                                                   $.7660
                                                                   \pm
                                                                   .0014$
                                                      & $.7678 \pm
                                                        .0022$ &
                                                                $.7687
                                                                \pm .0019$
  \\
  ogbn-mag & $.2712 \pm .0020$ & $.3635 \pm .0029$ & $.3879 \pm
                                                     .0030$ & $.3959
                                                              \pm
                                                              .0029$
                                                      & $.3983 \pm
                                                        .0050$ &
                                                                 $.4012
                                                                 \pm
                                                                 .0040$
  
\end{tabular}}
\end{table}

We study how the choice of $k$ affects the intrinsic
dimension value of the $k$-hop geometric dataset. For this, we compute the
intrinsic dimension for $k \in \{0,1,\dots,5\}$ for six datasets: the
three datasets mentioned above and \textbf{PubMed}, \textbf{Cora} and
\textbf{CiteSeer}~\citep{yang16}, which we retrieved from PyTorch
Geometric~\citep{fey19}.
Furthermore, we train GNNs which use the feature functions of
$k$-hop geometric datasets as information for training and
inference. This allows us to discover connections between the ID of
specific datasets with respect to the considered feature functions and
the performance of classifiers, which rely on these feature
functions. For this, we train SIGN models~\citep{rossi20} for $k \in
\{0,\dots,5\}$. Implementation details and parameter choices can be
found in~\cref{sec:setup}.

\subsubsection{Baseline Estimator}
\label{sec:baseline}

To investigate to which extent our ID function surpasses established ID
estimators with respect to estimating the discriminability of a dataset, we also
compute all ID values with the Maximum Likelihood Estimator
(MLE)~\citep{levina04}. This estimator is commonly used in the realm of deep
learning~\citep{pope20, ma18b, ma18}. For our experiments, we use the corrected
version proposed by~\citet{mackay05}. Note, that the MLE  is only applicable to
datasets $\mathcal{X} \in \mathbb{R}^{n \times d}$ and is thus not able to respect the neighborhood
aggregated feature functions. Hence, we incorporate the neighborhood information
of a $k$-hop dataset by concatenating feature vectors with the neighborhood
aggregated feature vectors. Due to performance reasons,  only subsets of the
data points are considered for \textbf{ogbn-mag} and \textbf{ogbn-products}.
More details to our usage of the MLE are discussed in~\cref{sec:impl_baseline}.

\subsubsection{Results}
\label{sec:agg_res}

We find that one iteration of neighborhood aggregation always leads to a huge drop
of the ID when using our ID function. However, consecutive iterations only lead to  a small decrease. For the datasets from OGB, some iterations lead to no drop
of the ID dimension at all. For \textbf{ogbn-mag}, only the
first iteration significantly decreases the ID, for \textbf{ogbn-products}, only the
first two iterations are relevant for decreasing the ID. It stands
out, that for \textbf{ogbn-arxiv}, the second and third iteration lead
to no significant decrease, but the fourth and fifth do.
The results for \textbf{PubMed} stand out. Here, the second iteration
of neighborhood aggregation leads to a comparable decrease as the
first one.

Considering the classification performances, the first iteration is
again the key factor, leading to a significant increase in
accuracy. As for the ID, the \textbf{PubMed} dataset behaves differently than the
other datasets: the second iteration of neighborhood aggregation
leads to a comparable increase in accuracy as the first.

The MLE ID behaves different. Here, no pattern of the first
iteration of aggregation being the key for decreasing the data
complexity is observed. For some datasets, the first rounds of feature
aggregation may even increase the intrinsic dimension. To sum up, our results indicate
that our ID is a better indicator for classification performance then
the MLE ID.

\subsection{Synthetic Data}
\begin{figure}[t]
  \centering
  \includegraphics[width=\linewidth]{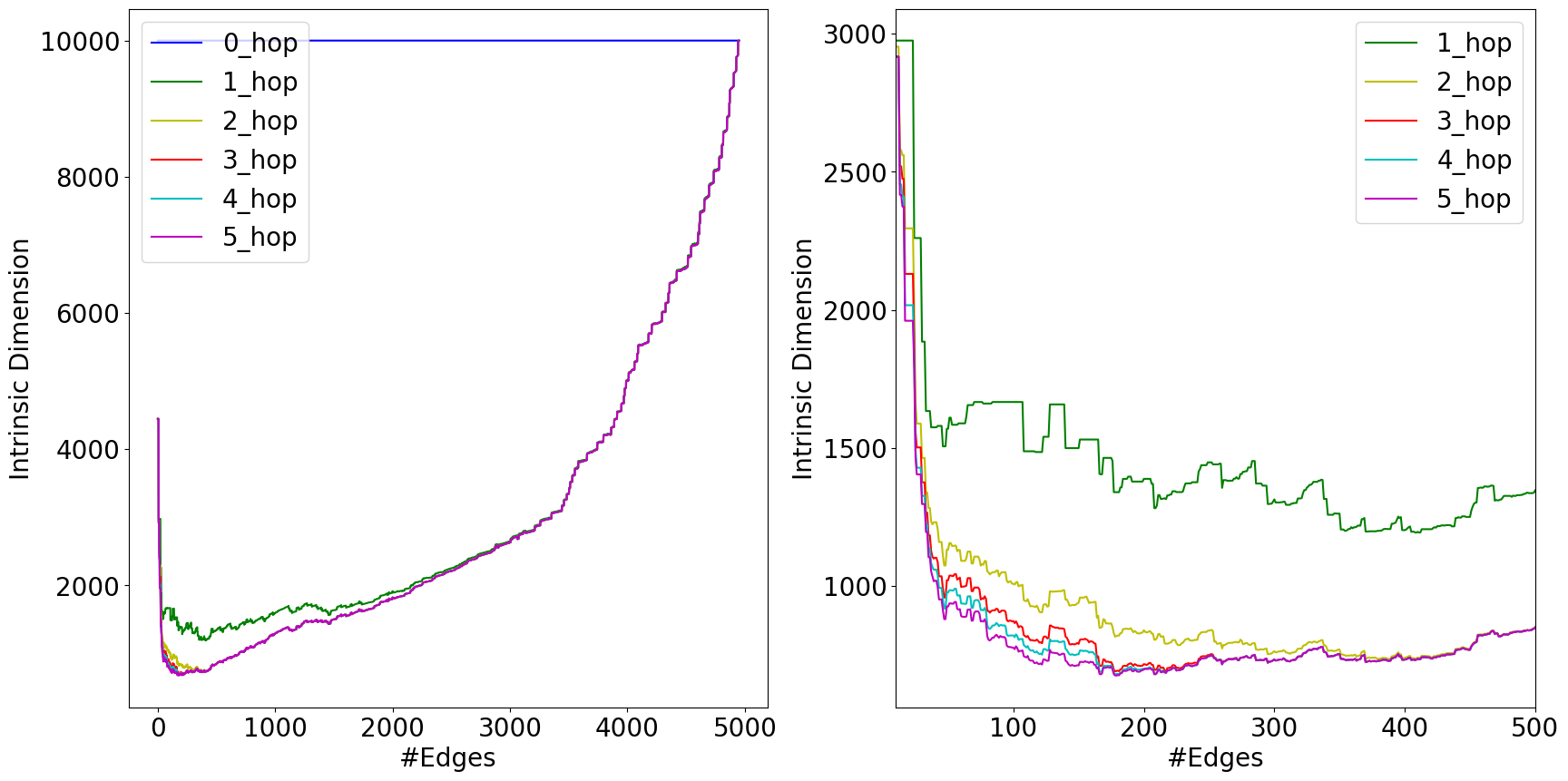}
  \caption{Intrinsic dimension of one-hot encoded features.
  The right plot is a zoom of the left one which hides the $0-$hop geometric dataset.}
  \label{fig:identity}
\end{figure}

To get further insights into the behavior of our ID notion, we now consider
$k-$hop geometric dataset for one-hot encoded graph data, i.e., $\mathcal{D}=(\mathcal{X},(V,E))$,
with $\mathcal{X}=\mathbb{I}_{|V|}$ where $\mathbb{I}_{|V|}$ is the $|V|$-dimensional
identity matrix. We consider the case of $V = \{1, \dots, 100\}$ and determine
the ID for $k \in \{0,\dots,5 \}$ for increasing edge sizes. To do so, we place
the $4950$ possible edges in a random order and add them step by step and
compute the ID notions for the $k-$hop geometric dataset. The results can be
found in~\cref{fig:identity}.

\subsubsection{Results}
The $0-$hop geometric dataset has an ID which does not depend on the amount of
edges. This is not surprising since it does not incorporate any graph
information. For all other $k$ values, the ID first sharply decreases and then
increases. This indicates that the addition of neighborhood aggregation is
particularly useful for graphs of moderate density. Here, the addition of
additional rounds of aggregation beyond the first one can further lower the ID.
For higher edge sizes, the ID difference between different $k$
values vanishes.  

\subsection{Approximation of Intrinsic Dimension on Large-Scale Data}
\label{sec:large}

To demonstrate the feasibility of our approach, we use it to
approximate the ID of the well known, large-scale
\textbf{ogbn-mag-papers100M} dataset. For this, we
construct the support sequence as in~\cref{sec:sups} with
$l=100.000$. The results are depicted in~\cref{tab:big}. On our \emph{Xeon
Gold System} with $16$ cores, approximating the ID of a $k$-hop
geometric dataset build from~\textbf{ogbn-mag-papers100M} is possible
within a few hours.
While the ID drops for every iteration of neighborhood aggregation,
the decrease becomes smaller. The ID of the
different $k$-hops can be differentiated by the approximation, i.e.,
$\partial_{s,-}(\mathcal{D}_i) > \partial_{s,+}(\mathcal{D}_{i+1})$
for $i \in \{0,\dots,4\}$. It stands out, that even for such a short
support sequence (compared to the size of the dataset), the observed
error is remarkably low. In detail, we can approximate the ID with an accuracy of
over $99.95 \%$. It is further remarkable, that the error does not change
significantly for different $k$. We observed this effect also for the
other datasets. Our results on \textbf{ogbn-papers100M} indicate,
that with short support sequences, we can sufficiently
approximate the ID of large-scale graph data.

\section{Errors of Random Data}
\label{sec:random}

\begin{table}[t]
  \caption{Approximation of intrinsic dimension for ogbn-papers100M.}
  \label{tab:big}
  \centering
  \begin{tabular}{l|llllll}
    \backslashbox{}{$k$}&0&1&2&3&4&5\\
    \midrule
    $\partial_{s,-}(\mathcal{D})$&$282.2380$& $171.7385$& $148.3323$ &
                                                                       $137.7662$ & $128.2751$ & $125.3418$\\
     $\partial_{s,+}(\mathcal{D})$&$282.3387$& $171.7997$& $148.3852$ &
                                                                       $137.8153$
                                & $128.3208$ & $125.3864$\\
    $\E(s,\mathcal{D})$&$0.0004$&$0.0004$&$0.0004$&$0.0004$&$0.0004$&$0.0004$
  \end{tabular}
\end{table}

\begin{table}[t]
  \caption{Error on randomly generated data.}
  \label{tab:random}
  \centering\begin{tabular}{|ccc|}
    \toprule
    $n$ & $d$ & $E(s,\mathcal{D})$ \\
    \midrule
    $10^{6}$ & $10$ & $2.55*10^{-4} \pm 1.13 * 10^{-8}$ \\
    $10^{6}$ & $50$ & $2.55*10^{-4} \pm 5.36 * 10^{-9}$ \\
    $10^{6}$ & $250$ & $2.55*10^{-4} \pm 2.31 * 10^{-9}$ \\
    \midrule
    $10^{7}$ & $10$ & $3.08*10^{-4} \pm 4.78*10^{-10}$\\
    $10^{7}$ & $50$ & $3.08*10^{-4} \pm 6.57*10^{-10}$\\
    $10^{7}$ & $250$ & $3.08*10^{-4} \pm 6.16*10^{-10}$\\
    \midrule
    $10^{8}$ & $10$ & $3.55*10^{-4} \pm 3.67*10^{-11}$\\
    $10^{8}$ & $50$ & $3.55*10^{-4} \pm 1.34*10^{-11}$\\
    $10^{8}$ & $250$ & $3.55*10^{-4} \pm 2.69*10^{-11}$\\
    \bottomrule
  \end{tabular}
\end{table}

To further understand how our approximation procedure behaves we conducted
experiments on random data. We considered different data sizes and different
amount of attributes. For this, we experimented with real-valued datasets, i.e.
datasets represented by an attribute matrix $\mathcal{X} \in \mathbb{R}^{n
\times d}$. Here, the feature functions are given by the data columns. To be
more detailed, the considered geometric dataset is $\mathcal{D}=(\{
\mathcal{X}_{i}\mid i \in \{1,\dots, n \}\}, \{X_{i} \mapsto X_{i,j} \mid j \in
\{1,\dots, d\}\}, \nu)$. Here, $\nu$ is again the normalized counting measure
and $\mathcal{X}_i$ is he $i-th$ row vector of $\mathcal{X}$. We iterate $n$
through $\{10^{6}, 10^{7}, 10^{8}\}$ and $d$ through $\{10, 50, 250\}$. We
repeat all experiments $3$ times. For all datasets, we build a support sequence
as described in~\cref{sec:sups} with $l=100,000$. The results can be found
in~\cref{tab:random}. 

For all datasets, the errors are small and the accuracy is over $99.9\%$ for all considered data sizes. The difference in the error for different values
of $d$ is negligible. Furthermore, we have small standard deviations. All this
indicates that $l=100,000$ is a reasonable default choice that leads to sufficient
approximations in a large range of data and attribute sizes.

\section{Conclusion and Future Work}
\label{sec:conc}
We presented a principle way to efficiently compute the intrinsic dimension (ID) of
geometric datasets. Our approach is based on an axiomatic foundation
and accounts for underlying
structures and is therefore especially tailored to the
field of geometric learning. We proposed a novel speed up technique
for an algorithm which has quadratic complexity with respect to the
amount of data points. This enabled us to compute the ID of several real-world graphs with up to millions
of nodes. Equipped with this ability, we shed light on connections of
classification performances of graph neural networks and the observed
intrinsic dimension for common benchmark datasets.
 Finally, using a novel
approximation technique, we were able to show that our method scales
to graphs with over 100 million nodes and billions of edges. We
illustrated this by using the well-known \textbf{ogbn-papers100M}
dataset.

Future work includes the identification of suitable feature functions
for other domains, such as learning on text or image
data. Incorporating the structure of such datasets into the
computation of intrinsic dimensionality is an open research problem.
Another promising research direction is to
investigate how the ID of datasets could be manipulated. Since our investigations suggest connections
between a low ID and high classification performances, this has the
potential to enhance
learning procedures.

\subsubsection*{Acknowledgement}
The authors thank the State of Hesse, Germany for funding this work as part of
the LOEWE Exploration project ``Dimension Curse Detector" under grant
LOEWE/5/A002/519/06.00.003(0007)/E19.

\bibliography{literature}
\bibliographystyle{tmlr}

\newpage
\appendix
\section{Appendix}

\subsection{Setup of SIGN classifiers}
\label{sec:setup}
For \textbf{PubMed}, \textbf{Cora} and \textbf{CiteSeer}, we train on
the classification task provided by Pytorch Geometric~\citep{fey19}
which was
earlier studied by~\citet{yang16}. All Open Graph Benchmark datasets
are trained and tested on the official \emph{node property prediction}
task.\footnote{\url{https://ogb.stanford.edu/docs/nodeprop/}}  Our
goal is not to find optimal classifiers but to discover connections
between the choice of $k$, the ID and classifier performance. Thus, we
omit excessive parameter tuning and stick to reasonable standard
parameters. For all tasks, we use a simple SIGN model~\cite{rossi20} with one hidden
inception layer and one classification layer. For \textbf{PubMed},
\textbf{CiteSeer} and \textbf{Cora}, we use batch sizes of 256, hidden
layer size of 64 and dropout at the input and hidden layer with
$0.5$. The learning rate is set to $0.01$. All these parameters were
taken from \citet{kipf17}. For \textbf{ogbn-arxiv}, \textbf{ogbn-mag}
and \textbf{ogbn-products}, we stick to the parameters from the SIGN
implementations on the OGB leaderbord. For \textbf{ogbn-arxiv}, we use
a hidden dimension of $512$, dropout at the input with $0.1$ and with
$0.5$ at the hidden layer.  For \textbf{ogbn-mag}, we use a hidden
dimension of $512$, do not dropout at the input and use dropout
with $0.5$ at the hidden layer. For \textbf{ogbn-products}, we use a hidden dimension of
$512$, input dropout of $0.3$ and hidden layer dropout of $0.4$. For
all ogbn tasks, the
learning rate is $0.001$ and the batch-size $50000$.
For all experiments, we train for a maximum of $1000$ epochs with early stopping on the
validation accuracy. Here, we use a patience of $15$. These are the
standard parameters of Pytorch
Lightning.\footnote{\url{https://www.pytorchlightning.ai/}} For all
models, we use an Adam optimizer with weight decay of $0.0001$. We
report mean test accuracies over 10 runs. The intrinsic dimensions and the test accuracy are
shown in~\cref{tab:k_hop_ids}.

\subsection{Details on Baseline ID Estimator}
\label{sec:impl_baseline}

To use the MLE ID, we have to convert the $k$-hop geometric
dataset $(V,F_{D,k},\nu)$ of graph data $D=(\mathcal{X},G)$, where $\mathcal{X} \in
\mathbb{R}^{n \times d}$ into a
real-valued feature matrix $\hat{X}$. This done by concatenating the rows
of $\mathcal{X}$ with the rows of $\hat{A}\mathcal{X},\dots \hat{A}^k\mathcal{X}$, i.e., $\hat{X} \in
\mathbb{R}^{n \times (k+1)d}$ with
\begin{equation*}
  \hat{X}_{i,j} \coloneqq
  \begin{cases}
    \mathcal{X}_{i,j} & j \in \{1,\dots,d\}, \\
    (A^n\mathcal{X})_{i, \hat{j}} & j = nd+\hat{j} \texttt{ for } n \in \{1, \dots, k\},
    \hat{j} \in \{1,\dots d\}.
  \end{cases}
\end{equation*}

The MLE is given via
\begin{equation}\label{eq:mle}
  \mle(\hat{X}) \coloneqq \frac{1}{n(k-1)}\sum_{i=1}^n\sum_{j=1}^{l-1}\log(\frac{d(\hat{X}_i,N_l(\hat{X}_i))}{d(\hat{X}_i,N_j(\hat{X}_i))}),
\end{equation}
where $d$ is the euclidean metric and $N_j(\hat{X}_i)$ is the $j$-th nearest
neighbor of $\hat{X}_i$ with respect to the Euclidean metric. Thus, the MLE
depends on a parameter $l$, which we set to $5$.

We implement the MLE by using the \emph{NearestNeighbors}
class of scikit-learn~\citep{pedregosa11} and then building the mean
of all $\log(\frac{d(\hat{X}_i,N_5(\hat{X}_i))}{d(\hat{X}_i,N_j(\hat{X}_i))})$ with $i \in
\{1,\dots,n\}$ and $j \in \{1,\dots, 5\}$. Here, we skip all elements
where $d(\hat{X}_i,N_j(\hat{X}_i))=0$. This can happen, when $\hat{X}$ has duplicated
rows, representing data points with equal attribute vectors.

For \textbf{ogbn-mag} and \textbf{ogbn-products},
computing~\cref{eq:mle} is not possible due to performance
reasons. Here, we sample $169,343$ indices\footnote{This is the amount of
nodes of ogbn-arxiv, the largest network for which the full
computation was feasible.} $I \subset \{1,\dots, n\}$ and
only compute
\begin{equation*}
  \mle(\hat{X}) \coloneqq \frac{1}{n(k-1)}\sum_{i \in I}\sum_{j=1}^{l-1}\log(\frac{d(\hat{X}_i,N_l(\hat{X}_i))}{d(\hat{X}_i,N_j(\hat{X}_i))}).
\end{equation*}.
\end{document}